\newtheorem{mytheorem}{\bf Theorem}
\newtheorem{myexample}{\bf Example}
\newtheorem{myproposition}{\bf Proposition}
\newtheorem{remark}{\bf Remark}
\title{Online Change Point Detection for Weighted and Directed Random Dot Product Graphs}
\author{\IEEEauthorblockN{Bernardo Marenco, Paola Bermolen, Marcelo Fiori, Federico Larroca, and Gonzalo Mateos,~\IEEEmembership{Senior Member,~IEEE}}
\thanks{Work in this paper is supported in part by ANII (grant FMV\_3\_2018\_1\_148149) and the NSF (awards CCF-1750428, CCF-1934962 and ECCS-1809356). B. Marenco, P. Bermolen, M. Fiori and F. Larroca are with Facultad de Ingenier\'ia, Universidad de la Rep\'ublica, Uruguay. G. Mateos is with the Dept. of Electrical and Computer Eng., University of Rochester.  Emails: \{bmarenco,paola,mfiori,flarroca\}@fing.edu.uy and gmateosb@ece.rochester.edu. Part of the results in this paper were submitted to the \textit{2021 EUSIPCO} and \textit{Asilomar Conferences}~\cite{RDPG_EUSIPCO_21,RDPG_Asilomar_21}.}}
\begin{document}
\maketitle

\begin{abstract}%
Given a sequence of random (directed and weighted) graphs, we address the problem of online monitoring and detection of changes in the underlying data distribution. Our idea is to endow sequential change-point detection (CPD) techniques with a graph representation learning substrate based on the versatile Random Dot Product Graph (RDPG) model. We consider efficient, online updates of a judicious monitoring function, which quantifies the discrepancy between the streaming graph observations and the nominal RDPG. This reference distribution is inferred via spectral embeddings of the first few graphs in the sequence. We characterize the distribution of this running statistic to select thresholds that guarantee error-rate control, and under simplifying approximations we offer insights on the algorithm's detection resolution and delay. The end result is a lightweight online CPD algorithm, that is also explainable by virtue of the well-appreciated interpretability of RDPG embeddings. This is in stark contrast with most existing graph CPD approaches, which either rely on extensive computation, or they store and process the entire observed time series. An apparent limitation of the RDPG model is its suitability for undirected and unweighted graphs only, a gap we aim to close here to broaden the scope of the CPD framework. Unlike previous proposals, our non-parametric RDPG model for weighted graphs does not require a priori specification of the weights' distribution to perform inference and estimation. This network modeling contribution is of independent interest beyond CPD. We offer an open-source implementation of the novel online CPD algorithm for weighted and direct graphs, whose effectiveness and efficiency are demonstrated via (reproducible) synthetic and real network data experiments.
\end{abstract}

\begin{keywords}
Online change-point detection, graph representation learning, node embeddings, random dot product graphs.
\end{keywords}

\vspace{-0.5cm}
%
\section{Introduction}\label{sec:intro}

\IEEEPARstart{O}{nline} (or sequential) change-point detection (CPD) is the problem of deciding whether (and if so when) the generating process underlying an observed data stream has changed; see e.g.,~\cite{page1954continuous} for seminal work in the context of quality control. 
The goal is to flag a problem (in order to take corrective actions) as soon as it happens, while controlling the probability of false alarm. Unlike offline or batch processing (see e.g.,~\cite{truong2020selective}), in the online CPD setting we do not have access to the full data sequence which could well be infinitely long.

Given the ubiquity of datasets that are generated in a streaming fashion, online CPD is a timely research area with applications to sensor networks~\cite{he2018sequentialCPD}, financial markets~\cite{keshavarz2020gaussian}, or, social networks~\cite{peel2014evolvingCPD,kaushik2020signalsCPD}. As these examples suggest, data are increasingly high-dimensional and possibly non-Euclidean. Indeed, here we will consider \emph{network data streams} in the form of graph sequences. In a nutshell, given an incoming sequence of random (possibly weighted and directed) graphs, we want to signal if and when the data generating mechanism changes.\vspace{-0.3cm}

\subsection{Relation to prior work on online CPD for network data}\label{Ss:Prior_Work}

Sequential CPD approaches are often parametric, and follow the general premise of minimizing detection delay subject to a constraint on the test's type-I error. For network data existing methods look for changes in the graphs' distribution~\cite{he2018sequentialCPD,keshavarz2020gaussian,peel2014evolvingCPD}, their topology~\cite{kaushik2020signalsCPD} and community structure~\cite{zhang2020icassp}, or else the distribution of signals supported on the nodes~\cite{ferrari2019camsap}. Some of these~\cite{keshavarz2020gaussian,peel2014evolvingCPD,kaushik2020signalsCPD} are only applicable to undirected graphs. A sequential non-parametric, $k$-nearest neighbors-based approach was developed in~\cite{chen2019sequentialCPD}, solely requiring a pairwise distance between samples (e.g., the Frobenius distance between graph adjacency matrices). 
Unlike methods based on generative models, said distance is prone to overlooking simple changes in network structure; see the comparisons in Section \ref{ssec:synthetics}. A computationally-intensive model-based CPD effort advocates the Generalized Hierarchical Random Graph (GHRG) model in~\cite{peel2014evolvingCPD}, which monitors posterior Bayes factors for all partitions of the data over a sliding window. 
The approach in~\cite{wang2014dynamicCPD} is more general, as it considers the workhorse Stochastic Block Model (SBM). The distribution of two so-termed scan statistics is derived  to signal changes in the input graph sequence. 

Going beyond SBMs, the recent work~\cite{yu2021onlineCPD} considers an inhomogeneous Bernoulli graph; whereby the existence of an edge between a pair of nodes $(i,j)$ is a Bernoulli random variable with probability $P_{ij}$, independent of all other pairs. Each timestep, two statistics are computed for a logarithmic grid of previous instants to check whether they exceed a certain threshold. Evaluating these statistics necessitates computing the eigendecomposition of an $N\times N$ matrix ($N$ is the number of graph nodes). In addition to being computationally intensive, the algorithm in~\cite{yu2021onlineCPD} has to store all historical data in memory, which may pose a major hurdle even for moderate-sized networks. The procedure offers solid theoretical guarantees on the detection delay and average run length.

Here instead we resort to the Random Dot Product Graph (RDPG) model, a particular but very versatile case of the inhomogeneous Bernoulli graph~\cite{young2007random,priebe2018survey}. In RDPGs each node has an associated latent position in $\reals^d$ with $d\leq N$, and $P_{ij}$ is given by the inner product between the corresponding vectors. As we discuss in Section \ref{sec:rdpg}, RDPGs capture phenomena commonly encountered with real-world graphs (e.g., statistical dependencies among edges) and subsume the SBM as a special case, while still being amenable to analysis~\cite{priebe2018survey}. Moreover, RDPGs offer interpretability, an attractive feature that simplifies the explanation of the detected change-points.

\subsection{Paper outline and contributions}\label{Ss:Outline}

Building on~\cite{kirch2015estimating}, we assume a clean historical dataset with no change-points is available, from which we estimate the latent nodal vectors via the adjacency spectral embedding (ASE) in an offline training phase. As new data arrive in a streaming fashion during the operational phase, the novel online CPD algorithm (Section \ref{sec:online_cpd}) recursively updates a \emph{monitoring function} statistic whose null distribution we characterize analytically via asymptotic arguments. In addition to providing theoretical guarantees on the false alarm rate of the resulting online CPD scheme, an attractive feature is its limited memory footprint -- we store a single $N\times N$ matrix in memory (in addition to the estimated latent vectors, naturally). Moreover, the resulting lightweight statistic updates are an order-of-magnitude more efficient than those based on repeated eigendecompositions. Using simplifying approximations we derive conditions under which changes may go undetected.

An additional contribution is to extend the vanilla RDPG model~\cite{young2007random,padilla2019change} to accommodate weighted and directed graphs (digraphs), which we seamlessly adopt to perform online CPD for these general network models (Section \ref{sec:directed_weighted}). Extensions to digraphs are straightforward~\cite{priebe2017semiparametric}, but we carefully study those ambiguities inherent to the model (not discussed in previous work) which may challenge downstream CPD methods. 
Unlike previous RDPG proposals for weigthed graphs~\cite{tang2017robust,deford2016random},
our new non-parametric model in Section \ref{subsec:weighted_rdpg} does not require \emph{a priori} specification of the weights' distribution to perform provably consistent inference and estimation. We believe this contribution is significant in its own right, and beyond CPD it can e.g., impact node classification and visualization of network data.
Numerical tests in Section \ref{sec:simus} corroborate the effectiveness of the proposed online CPD method, using both simulated and real network datasets that we share in our Github repository. Concluding remarks and future directions are outlined in Section \ref{sec:conclusions}.

Relative to its conference precursors~\cite{RDPG_Asilomar_21,RDPG_EUSIPCO_21}, here we consider online CPD for weighted and directed graphs through a unified presentation along with full-blown technical details (including extended discussions, examples and unpublished proofs for all the theoretical results). Noteworthy novel pieces include: (i) examination of delay and change-detectability conditions; (ii) adoption of finite-memory (windowed) statistics; (iii) integrating the directed and weighted RDPG models for \emph{online} CPD; (iv) a consistency result for the weighted RDPG embeddings; and (v) a comprehensive and reproducible performance evaluation protocol. The latter offers comparisons with batch and online CPD baselines; an study of detection delay; the choice of monitoring function and thresholds; as well as applications to wireless and social networks.


\section{Preliminaries and Problem Statement}\label{sec:rdpg}

Here we introduce the necessary background on RDPG modeling and inference. The interested reader is referred to the comprehensive survey~\cite{priebe2018survey} for additional details about batch statistical network analysis. We then state the online CPD problem where the streaming graphs are modeled as RDPGs.

\subsection{Random dot product graphs}\label{subsec:vanilla_rdpg}

Consider an unweighted and undirected graph $G=(\ccalV,\ccalE)$, with nodes $\ccalV=\{1,\ldots,N\}$ and edges $\ccalE\subseteq \ccalV\times \ccalV$. If nodes $i$ and $j$ are connected in $G$, then the unordered pair $(i,j)\in \ccalE$. 
More general models involving directed and weighted graphs will be dealt with in Section \ref{sec:directed_weighted}. To start, we restrict ourselves to the simplest possible case for ease of exposition.  

In the RDPG model of $G$ each node $i\in \ccalV$ has an associated latent position vector $\bbx_i\in \mathcal{X} \subset \reals^d$, and edge $(i,j)$ exists with probability $P_{ij}=\bbx_i^\top\bbx_j$, independent of all other edges. We do not allow for self loops, hence $P_{ii}=0$ for all $i\in\ccalV$. The geometric interpretation is that nodes with large $\|\bbx_i\|_2$ tend to exhibit higher connectivity, whereas a small angle between $\bbx_i$ and $\bbx_j$ indicates higher ``affinity'' among $i$ and $j$. Note that the set $\mathcal{X}$ of possible $\bbx_i$ is such that $\bbx^\top\bby\in [0,1],\,\forall\, \bbx,\bby\in \mathcal{X}$. Just like with blockmodels and SBMs~\cite{kolaczyk_2017}, in general vectors $\bbx_i$ may be random, drawn from a (so-termed inner product) distribution in $\ccalX$. The dimensionality $d$ of the latent space is a model parameter, often much smaller than $N$.  

Thus, letting $\bbA\in\{0,1\}^{N\times N}$ be the random symmetric adjacency matrix of $G$ and $\bbX=[\bbx_1,\ldots,\bbx_N]^\top\in \reals^{N\times d}$ the matrix of latent vertex positions, the RDPG model specifies 
\begin{equation}\label{eq:rdpg_dist}
	\Pc{\bbA\given\bbX} = \prod_{i<j} (\bbx_i^\top\bbx_j)^{A_{ij}}(1-\bbx_i^\top\bbx_j)^{1-A_{ij}}.
\end{equation}
That is, \emph{given} $\bbX$, edges are conditionally independent with $A_{ij}\sim\textrm{Bernoulli}(\bbx_i^\top\bbx_j)$. 
\begin{myexample}\label{Ex:ER_SBM_Lantent}\normalfont
The RDPG model is a tractable yet expressive family of random graphs that subsume Erdös-R\'enyi (ER) and SBM ensembles as particular cases. Indeed, if $\bbx_i=\sqrt{p}\,\: \forall\, i$, we obtain an ER graph with edge probability $p$.    
An SBM with $M$ communities may be generated by restricting $\bbX$ to having only (at most) $M$ different columns (i.e.\ $|\mathcal{X}|=M$); see also~\cite{priebe2018survey} for additional examples. On the other hand, the RDPG is a particular case of the latent space model~\cite{hoff2002latent}, in which edge probabilities $P_{ij}=\kappa(\bbx_i,\bbx_j)$ are specified by means of a symmetric link function $\kappa$.
\end{myexample}

\subsection{Inference on RDPG via the adjacency spectral embedding}

Given the matrix $\bbX$ of latent vertex positions, the joint distribution in \eqref{eq:rdpg_dist} specifies the generative process to sample graphs from the RDPG model. We now discuss the associated inference (a.k.a. node embedding) problem. That is, how to estimate $\bbX$ having observed a graph stemming from an RDPG with adjacency matrix $\bbA$. 

In lieu of a maximum-likelihood estimator that is intractable beyond toy graphs~\cite{scheinerman2010rdpg}, the key intuition is that $\bbA$ is a noisy observation of 
\begin{gather}\label{eq:probamatrix}
	\bbP=\bbX \bbX^\top,
\end{gather}
the rank-$d$ matrix of edge probabilities $P_{ij}$, since $\E{\bbA\given \bbX}=\bbP$. It is thus natural to adopt the estimator
\begin{gather}\label{eq:ase_argmin}
	\hbX=\argmin_\bbX\|\bbA-\bbX\bbX^\top\|_F^2,\textrm{ s. to rank}(\bbX)=d.
\end{gather}
The solution to \eqref{eq:ase_argmin} is readily given by
\begin{gather}\label{eq:ase}
	\hbX = \hbV\hbLambda^{1/2},
\end{gather}
where $\bbA=\bbV\bbLambda\bbV^\top$ is the eigendecomposition of $\bbA$, $\hbLambda\in\reals^{d\times d}$ is a diagonal matrix with the $d$ largest eigenvalues of $\bbA$, and $\hbV\in\reals^{N\times d}$ are the corresponding $d$ dominant eigenvectors. We are assuming that $\hbLambda$ has only non-negative values, an apparent limitation that may be easily circumvented~\cite{rubindelanchy2017statistical}. The bias introduced by the implicit constraint $\textrm{diag}(\bbX\bbX^\top)\approx \mathbf{0}$ can be alleviated as well~\cite{scheinerman2010rdpg}. In practice, $d$ is likely unknown but can be estimated by looking for ``elbows'' on the so-termed eigenvalue scree plot~\cite{zhu2006automatic}. We find it is safer to overestimate $d$ (which will add some noise) than underestimate it, that will oversimplify the model and may e.g., hide change-points~\cite{RDPG_EUSIPCO_21}. Estimator \eqref{eq:ase} is known as the Adjacency Spectral Embedding (ASE), which is asymptotically normal and approaches $\bbX$ as $N\to\infty$ provided the true $d$ is chosen~\cite{priebe2018survey}. It is also possible to define an analogous normalized Laplacian spectral embedding for undirected $G$, which can be shown to enjoy similar desirable asymptotic properties to those of the ASE~\cite{priebe2018survey}.

Before moving on and stating the formal online CPD problem addressed in this paper, a couple of remarks on model identifiability and ASE variance reduction are in order.


\begin{remark}[Identifiability of latent positions]\label{rem:ambiguity}\normalfont
The RDPG model is identifable up to rotations of $\bbX$. To see this, consider an orthogonal matrix $\bbW\in\reals^{d\times d}$, and note that the rotated vectors $\bbX\bbW$ will produce the same probability matrix as in \eqref{eq:probamatrix}.   
Hence, the estimator \eqref{eq:ase} is unbiased up to an unknown rotation matrix $\bbW$, and the ambiguity should be accounted for when detecting changes on $G$'s distribution.
\end{remark}

\begin{remark}[ASE variance reduction]\label{rem:var_red}\normalfont Dispersion of ASE estimates can be reduced if one has access to multiple observations from the underlying RDPG. Indeed, let $\bbA[1]\ldots,\bbA[m]$ be an independent sequence of adjacency matrices, all adhering to an RDPG with latent position matrix $\bbX\in\reals^{N\times d}$. 
Define the mean adjacency matrix
\begin{gather}\label{eq:mean_adjacency}
	\displaystyle \barbA= \frac{1}{m}\sum_{\substack{t=1}}^{m} \bbA[t],
\end{gather}
and henceforth let $\hbX$ be the ASE decomposition of $\barbA$; i.e., the solution of \eqref{eq:ase_argmin} using $\barbA$ instead of $\bbA$. Since $\barbA$ is also an unbiased estimator of $\bbP$ and $\var{\bar{A}_{ij}}=\frac{1}{m}P_{ij}(1-P_{ij})$, then as $N\rightarrow\infty$ the estimated latent positions $\hbX$ will follow a normal distribution with variance scaled by $\frac{1}{m}$ relative to the variance of the ASE obtained from a single graph as in \eqref{eq:ase_argmin}~\cite{tang2018connectome}. The alternative of averaging individual ASEs is problematic due to the rotational ambiguity discussed in Remark \ref{rem:ambiguity}. Indeed, alignment of the (rotated) ASEs of a graph collection would entail solving several Procrustes distance minimization problems, or else computing the so-termed omnibus embedding~\cite{levin2017omnibus}.
\end{remark}

\subsection{Problem statement}

Suppose we acquire a batch of $m$ graphs as in Remark \ref{rem:var_red}, in which all matrices stem from the same RDPG model. We will refer to that sequence as the training data set, which is used in an \emph{offline} initialization phase to estimate model parameters from the null model. During the operational phase we observe a (possibly infinite) sequence of streaming adjacency matrices $\bbA[m+1], \bbA[m+2], \ldots $ , and 
would like to detect at what time $t>m$ (if any) the null model described in \eqref{eq:rdpg_dist} is no longer valid (i.e., drifts from the aforementioned RDPG model represent the alternative hypothesis). 
We tackle this CPD problem in an \emph{online} fashion, meaning graph observations $\{\bbA[m+k]\}_{k\geq 1}$ are sequentially and efficiently monitored as they are acquired, without having to store the whole multivariate time series. This way, the algorithm's computational complexity and memory footprint does not grow with $k$. Another attractive feature is the possibility of detecting the change in (pseudo) real-time, ideally soon after it occurs and with control on the probability of false alarm (i.e., type-I error).

We will also consider generalizations of the aforementioned baseline CPD problem in order to account for weighted and directed graph sequences. This calls for fundamentally re-examining the RDPG model to accommodate said observations -- especially in the weighted case --, as well as the associated embedding algorithms and the overall online CPD framework.

\section{Online Change-Point Detection}\label{sec:online_cpd}

Our idea to develop an online CPD framework for network data is to endow sequential CPD techniques with a graph representation learning substrate based on RDPGs.

\subsection{General algorithmic framework}\label{sec:general_framework}

We build on the so-called estimating function approach for sequential CPD~\cite{kirch2015estimating,kirch2018sequentialCPD}, which we markedly broaden to accommodate network data. The central notion behind this online CPD method is to consider a \textit{monitoring function} $\bbH$ of each streaming graph $\bbA[t]$, that should satisfy $\E{\bbH} = \bb0$ under the null hypothesis. If one monitors a cumulative sum of $\bbH$, that quantity should intuitively remain small provided there are no changes in the underlying model. If there is a change however, then $\E{\bbH}\neq \bb0$ and we should observe a drift in the trend of the sum.

As proposed in~\cite{kirch2015estimating} for a network-agnostic setting, we first estimate the parameters of the underlying null RDPG model using the training data set, i.e., we estimate the latent positions matrix $\bbX$. The estimation should be carried out with an \textit{estimating function} $\bbG$, where the estimated parameter $\hbX$ is the solution to a system of equations of the form
\begin{gather} \label{eq:suma_G}
	\sum_{t=1}^m \bbG(\bbA[t],\hbX) = \mathbf{0}.
\end{gather}

To define such a function for our problem, given the training data set we estimate $\bbX$ as the ASE corresponding to $\barbA$ [cf. \eqref{eq:mean_adjacency} and the discussion in Remark \ref{rem:var_red}]. Taking the derivative w.r.t. $\bbX$ of the objective function in \eqref{eq:ase_argmin} (with $\bbA \leftarrow \barbA$)  and setting it to zero, we arrive at
\begin{gather*}
	\sum_{t=1}^m \left(\hbX\hbX^\top-\bbA[t]\right)\hbX=  \mathbf{0},
\end{gather*}
suggesting the use of $\bbG(\bbA[t],\hbX)=\left(\hbX\hbX^\top-\bbA[t]\right)\hbX$ as the estimating function. 
Accordingly, $\bbG$ amounts to projecting the residual $\hbX\hbX^\top-\bbA[t]$ onto $\hbX$. 

In order to detect a change on the underlying model during the operational phase, we will track the cumulative sum (CUSUM) of a monitoring function $\bbH$ as new adjacency matrices arrive for $t\geq m+1$, namely
\begin{gather*}
	\bbS[m,k] = \sum_{t=m+1}^{m+k}\bbH(\bbA[t],\hbX).
\end{gather*}
While it is possible (and often natural) to use the same function for both estimation and monitoring (i.e. $\bbH = \bbG$), we show in Section \ref{ssec:synthetics} that adopting the residual itself instead of a projection yields in a more powerful detector. 
Thus, we choose
\begin{gather*}
	\bbH(\bbA[t],\hbX)=\hbX\hbX^\top-\bbA[t].
\end{gather*}
We reiterate here that the matrix $\hbX$ is computed during training, via the ASE of the average $\barbA$ of the adjacency matrices in the training set. Once monitoring starts, $\hbX$ is fixed and we do not recompute the ASE for new observations. 

Since all involved matrices are hollow and symmetric, we only need to consider entries, say, above the main diagonal. It will also prove useful in the analysis that follows to vectorize the resulting values. We thus define a vector function $\bbh$ as
\begin{gather}\label{eq:vec_triu_h}
	\bbh(\bbA[t],\hbX)=\textrm{vec}\left[\textrm{triu}\left(\hbX\hbX^\top-\bbA[t]\right)\right],
\end{gather}
where $\textrm{vec}(\textrm{triu}(\bbB))$ means arranging the entries above the main diagonal of matrix $\bbB$ in a vector.  
If $\bbB \in \reals^{N\times N}$, then $\textrm{vec}(\textrm{triu}(\bbB)) \in \reals^{r}$, with $r := \frac{N(N-1)}{2}$. 


If the norm of the partial sum
\begin{gather}\label{eq:cusum}
	\bbs[m,k] = \sum_{t=m+1}^{m+k}\bbh(\bbA[t],\hbX)
\end{gather}
exceeds a certain threshold, we will conclude that the model is no longer valid. Let us then denote our CUSUM statistic as
\begin{gather*}
	\Gamma[m,k] = \|\bbs[m,k]\|_2^2. 
\end{gather*}
In order to control the variance of $\Gamma[m,k]$  as $k$ grows, a weighting function $\omega[k]$ is also introduced. 
We use \mbox{$\omega[k]=(rk^{3/2})^{-1}$} and instead monitor $\omega[k]\Gamma[m,k]$; the reason for this choice is explained in the next section when we derive said variance for the null distribution.

All in all, the null hypothesis of no change will be rejected at the first time instant $k$ when
\begin{gather*}
	\omega[k]\Gamma[m,k] > c_{\alpha}[k],
\end{gather*}
where $c_{\alpha}[k]$ is a certain threshold that depends on the distribution of $\omega[k]\Gamma[m,k]$ under the null hypothesis and the prescribed type-I-error level $\alpha$. In the next section we will discuss how this threshold is chosen after characterizing the running statistic's null distribution. A pseudocode of the  online CPD method including the offline (training) and operational (monitoring) phases is tabulated under Algorithm \ref{A:online_CPD}. 

\begin{algorithm}[t!]
    \caption{Online change-point detection for RDPGs}
\label{A:online_CPD}
\algsetup{linenosize=\normalsize}
\begin{algorithmic}[1]
   \REQUIRE  Training graphs $\bbA[t], t=1\ldots m$.
   \STATE Compute the ASE $\hbX$ of $\barbA$ in \eqref{eq:mean_adjacency} (see Remark \ref{rem:var_red}) 
   \STATE Compute threshold function $c_{\alpha}$ (see Section \ref{ssec:imp_details})
   \STATE Initialize partial sum $\bbs[m,0]=\mathbf{0}$
   \FOR{$k = 1, 2, \dots$}
   \STATE Acquire graph $\bbA[m+k]$
   \STATE Compute monitoring function $\bbh(\bbA[m+k],\hbX)$
   \STATE Update CUSUM statistic $\Gamma[m,k]$ (see Remark \ref{rem:complexity}) 
   \IF{$w[k]\Gamma[m,k] > c_{\alpha}[k]$}
   \STATE Change point detected at time $k^*=k$
   \STATE \textbf{break}
   \ENDIF
   \ENDFOR 
   \RETURN $k^*$.
\end{algorithmic}
\end{algorithm}

\begin{remark}[Computational complexity]\label{rem:complexity}\normalfont
Efficient \emph{recursive} calculation of the cumulative monitoring function $\bbs[m,k]=\bbs[m,k-1]+\bbh(\bbA[m+k],\hbX)$ incurs $O(N^2)$ memory storage and computational complexity. The cost of forming the weighted CUSUM statistic $\omega[k]\Gamma[m,k]$ is of the same order. A single ASE is required in the \emph{offline} training phase to yield fixed edge probabilities estimates $\hbX\hbX^\top$. No embeddings have to be recomputed each time a new graph is observed. {To gain discriminative power in the statistical tests we perform, an alternative would be to examine the CUSUM statistic at every time point $t\in[m+1,\ldots,m+k]$. This comes at the price of increased computational complexity, since it would entail computing $k$ additional ASEs during the monitoring phase. This computational challenge is compounded with the need to derive the limiting distribution of the resulting stochastic process.}
\end{remark}

\subsection{Statistical analysis of the null distribution}\label{ssec:statistical_null}

In order to select the weighting and threshold functions, we will study the distribution of our statistic under the null hypothesis. We will first develop theory for the case when the ASE estimate is error-free, i.e., $\hbX\hbX^\top =\bbX\bbX^\top  =\bbP$. This way the estimated latent positions allow for a perfect reconstruction of the connection probability matrix. In practice, this will be valid when $m$ and/or $N$ are large enough. Since for some applications this may not be necessarily true, we will then extend the analysis for the imperfect estimation case.\vspace{2pt}


\noindent \textbf{Perfect ASE estimation.} In this favorable case one has\footnote{We have omitted the dependence of $\bbh$ on $t$ and $\hbX$ for clarity.} \mbox{$\bbh=\textrm{vec}\left[\textrm{triu}\left(\bbP-\bbA[t]\right)\right]$}, with $\E{\bbh} = \bb0$. The covariance matrix $ \bm\Sigma_H=\mathbb{E}(\bbh\bbh^\top)  \in\mathbb{R}^{r\times r}$ has null non-diagonal entries since the random variables $A_{ij}$ are mutually independent. The diagonal entries are $\var{A_{ij}}=P_{ij}(1-P_{ij})$.
In short,  $\bm\Sigma_H$ is a diagonal matrix whose nonzero entries are $p_l(1-p_l),\, l = 1,\dots,r$, with $p_l$ denoting the entries of $\textrm{vec}\left[\textrm{triu}\left(\bbP\right)\right]$ (i.e., a reindexing of $P_{ij}$).

Given this characterization of the first two moments of $\bbh$, the following proposition gives the asymptotic distribution of the CUSUM statistic $\Gamma[m,k]$ as $k\to \infty$. In practice, we rely on this limiting distribution as an approximation (for finite $k$) based on which we set the treshold $c_{\alpha}[k]$.

\begin{myproposition}\label{prop:perfect}
Suppose the perfect ASE estimation assumption $\hbX\hbX^\top =\bbX\bbX^\top  =\bbP$ holds. Then, as $k\to\infty$ the test statistic sequence converges in distribution, namely
	\begin{gather}\label{eq:gamma_dist}
		k^{-1}\Gamma[m,k] \stackrel{D}{\to}\sum\limits_{l=1}^{r} p_l(1-p_l) y_l^2,
	\end{gather}
	where $\{y_l\}_{l=1}^{r}$ are i.i.d. standard Gaussian random variables.
\end{myproposition}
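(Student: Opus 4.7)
The plan is straightforward: the statement is essentially a multivariate central limit theorem composed with the continuous mapping theorem, so I would structure the proof in three short steps and highlight where care is required.

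First, I would unpack the monitoring function under the perfect-estimation hypothesis. Since $\hbX\hbX^\top=\bbP$, the definition \eqref{eq:vec_triu_h} reduces to $\bbh(\bbA[t],\hbX)=\mathrm{vec}[\mathrm{triu}(\bbP-\bbA[t])]$. Under the null model the adjacency matrices $\bbA[m+1],\bbA[m+2],\ldots$ are i.i.d.\ samples from the same RDPG, so the summands $\bbh(\bbA[t],\hbX)$ for $t\geq m+1$ are i.i.d.\ zero-mean random vectors in $\reals^{r}$. Edges within a single graph are mutually independent conditional on $\bbX$, so the covariance $\bbSigma_H=\mathbb{E}[\bbh\bbh^\top]$ is diagonal with entries $p_l(1-p_l)$, as already noted in the text. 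In particular $\bbSigma_H$ has finite entries (the $A_{ij}$ are bounded), so all moment conditions needed for the CLT are automatic.

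Second, I would invoke the multivariate CLT on the partial sum \eqref{eq:cusum}. With $r$ fixed and the summands i.i.d.\ with mean $\bb0$ and covariance $\bbSigma_H$,
\begin{equation*}
k^{-1/2}\,\bbs[m,k] \;\stackrel{D}{\to}\; \ccalZ \sim \ccalN(\bb0,\bbSigma_H)
\quad\text{as } k\to\infty.
\end{equation*}
Applying the continuous mapping theorem to $z\mapsto\|z\|_2^2$ then yields
\begin{equation*}
k^{-1}\Gamma[m,k] \;=\; \bigl\|k^{-1/2}\bbs[m,k]\bigr\|_2^2 \;\stackrel{D}{\to}\; \|\ccalZ\|_2^2.
\end{equation*}
Finally, since $\bbSigma_H$ is diagonal, I can write $\ccalZ_l=\sqrt{p_l(1-p_l)}\,y_l$ with $y_1,\ldots,y_r$ i.i.d.\ standard Gaussians, and hence $\|\ccalZ\|_2^2=\sum_{l=1}^{r}p_l(1-p_l)\,y_l^2$, which is exactly \eqref{eq:gamma_dist}.

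The argument really has no hard core: diagonality of $\bbSigma_H$ and boundedness of the Bernoulli summands remove all moment worries, and the dimension $r=N(N-1)/2$ is held fixed as $k\to\infty$ so no high-dimensional CLT machinery is needed. The only mildly delicate point worth flagging explicitly in the write-up is the justification that $\bbh(\bbA[t],\hbX)$ are \emph{i.i.d.}\ across $t$ under the null, since $\hbX$ is itself a random quantity learned from the training data: conditional on the training set (and hence on $\hbX$), the operational-phase graphs are independent of $\hbX$ and identically distributed, so the CLT applies conditionally, and the limiting law does not depend on the conditioning (which gives the unconditional convergence in distribution claimed). This also makes clear why the ``perfect estimation'' hypothesis is convenient here, and motivates the subsequent extension to the imperfect-estimation case announced in the paper.
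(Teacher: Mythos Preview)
Your proposal is correct and follows essentially the same approach as the paper's own proof: apply the multivariate CLT to $k^{-1/2}\bbs[m,k]$ to obtain a $\ccalN(\bb0,\bbSigma_H)$ limit, then pass to the squared norm (the paper does this computation directly, you invoke the continuous mapping theorem explicitly) and exploit the diagonality of $\bbSigma_H$ to arrive at $\sum_{l}p_l(1-p_l)y_l^2$. Your added remarks on why the summands are i.i.d.\ and on the conditioning with respect to $\hbX$ are slightly more careful than the paper's write-up but do not constitute a different argument.
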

\begin{proof}
Invoking the Central Limit Theoreom (CLT), as $k\to \infty$ the
	distribution of $k^{-1/2}\bbs[m,k]$ in \eqref{eq:cusum} converges to a multivariate Gaussian distribution $\mathcal{N}\left(\bb0,\bbSigma_H\right)$, i.e.,  $k^{-1/2}\bbs[m,k]\stackrel{D}{\to} (\bm\Sigma_H)^{1/2}\bby$, where $\bby$ is a standard Gaussian random vector. Hence, $k^{-1}\Gamma[m,k]=\|k^{-1/2}\bbs[m,k]\|_2^2$ also converges in distribution because
	\begin{align*}
		k^{-1}\Gamma[m,k]&{}=(k^{-1/2}\bbs[m,k])^\top k^{-1/2}\bbs[m,k]\\
		&{}\stackrel{D}{\to} (\bm\Sigma_H^{1/2}\bby)^\top\bm\Sigma_H^{1/2}\bby\\&{} =  \bby^\top \bm\Sigma_H \bby\\
		&{}= \sum_{l=1}^{r}p_l(1-p_l) y_l^2,
	\end{align*}
	which is the desired result in \eqref{eq:gamma_dist}.
\end{proof}
%

\begin{remark}[Convergence rate and network size]\normalfont
By bringing to bear Berry-Essen type results for the CLT, one can establish that the distribution of $k^{-1}\Gamma[m,k]$ converges to the limit stated in Proposition \ref{prop:perfect} at a rate $O(k^{-1/2})$, independent of $r$ and hence the graph size $N$; see e.g.,~\cite[Theorem 1.1]{Bentkus2003}. 
\end{remark}

Since $y_l\sim \ccalN(0,1)$ then $y_l^2 \sim \chi^2(1)$ (chi-squared distribution with one degree of freedom). By virtue of Proposition \ref{prop:perfect} and for sufficiently large $k$, we can approximate the mean and variance of $\Gamma[m,k]$ as
\begin{align}
	\E{\Gamma[m,k]}&{}\approx k\sum\limits_{l=1}^{r} p_l(1-p_l),\nonumber\\
	\var{ \Gamma[m,k]}&\approx  2k^2\sum\limits_{l=1}^{r} p_l^2(1-p_l)^2,\label{eq:mean_var_Gamma}
\end{align}
where we have used that the $\{y_l\}_{l=1}^{r}$ are mutually independent. 

To control the growing variance of $\Gamma[m,k]$, the weighting function for the perfect ASE case can be chosen as $\displaystyle \omega[k] = (rk)^{-1}$. The threshold $c_{\alpha}[k]$ is thus selected as the $(1-\alpha)$-quantile of the limiting distribution in \eqref{eq:gamma_dist}, which provides a type-I error of approximately $\alpha$. Next, we show that in the presence of estimation errors the weighting function will have to be readjusted accordingly.\vspace{2pt}

\noindent \textbf{Imperfect ASE estimation.} In this case, we will write
\begin{gather*}
	\hbX\hbX^\top-\bbA[t] = \bbX\bbX^\top-\bbA[t] + \hbX\hbX^\top- \bbX\bbX^\top,
\end{gather*}
where $\bbX$ is the true latent positions matrix (cf. $\bbP=\bbX\bbX^\top$). 
Defining the estimation error \mbox{$\bbE = \hbX\hbX^\top- \bbX\bbX^\top$}, then
\begin{gather}\label{eq:H_non_perfect}
	\bbh(\bbA[t],\hbX) =\textrm{vec}\left[\textrm{triu}\left(\bbX\bbX^\top-\bbA[t]\right)\right] + \bbe,
\end{gather}
where $\bbe = \textrm{vec}\left[\textrm{triu}\left(\bbE\right)\right]=[e_1,\ldots,e_r]^\top$. So the first term in \eqref{eq:H_non_perfect} corresponds to a perfect ASE, while the second one captures the estimation error stemming from an imperfect reconstruction of $\bbP$. Note that after training, $\bbe$ is fixed and it does not depend on $t$.

Using \eqref{eq:H_non_perfect} and by virtue of the CLT, it follows that for sufficiently large $k$ the distribution of $\bbs[m,k]$ can be well approximated by the multivariate Gaussian $\mathcal{N}\left(k\bbe,k\bm\Sigma_H\right)$. Standard calculations for the norm of a non-centered Gaussian vector suffice to assert that the distribution of $\Gamma[m,k]$ can be in turn approximated by the distribution of the random variable
\begin{gather}\label{eq:gamma_dist_non_perfect}
	\bar{\Gamma}=k\sum\limits_{l=1}^{r} p_l(1-p_l)\left( y_l + b_l\right)^2,
\end{gather}
where $\{y_l\}_{l=1}^{r}$ is an independent sequence of standard Gaussian random variables and $\{b_l\}_{l=1}^{r}$ are the entries of vector $\bbb = \sqrt{k} \bm\Sigma_H^{-1/2} \bbe$. Note that if the ASE estimation is perfect, then $\bbe = \bb0$ and we recover the distribution in Proposition \ref{prop:perfect}.

For large $k$, using \eqref{eq:gamma_dist_non_perfect} we can approximate the expected value and variance of $\Gamma[m,k]$ as in the error-free case. The difference here is that each summand $\left( y_l + b_l\right)^2\sim \chi^2(1,b_l^2)$, i.e., a non-central chi-squared distribution with one degree of freedom and parameter $\displaystyle b_l^2 = \frac{k}{p_l(1-p_l)}e_l^2$. 
Hence, one finds
\begin{align}
	\E{\Gamma[m,k]}&{}\approx k^2 \|\bbe\|_2^2 + k\sum\limits_{l=1}^{r} p_l(1-p_l)\nonumber \\ 
	&{}= k^2 \|\bbe\|_2^2 + k\|\bm\sigma\|_1,\label{eq:mean_Gamma_error}\\
	\var{\Gamma[m,k] }&{} \approx 4k^3 \sum\limits_{l=1}^{r} p_l(1-p_l)e_l^2 + 2k^2\sum\limits_{l=1}^{r} p_l^2(1-p_l)^2\nonumber\\ &{}= 4k^3 \bm\sigma^\top \bbe^2 + 2k^2\|\bm\sigma\|_2^2,\label{eq:var_Gamma_error}
\end{align}
where for notational convenience we defined the auxiliary vector $\bm\sigma$ with entries $\{p_l(1-p_l)\}_{l=1}^r$, and $\bbe^2$ denotes the entry-wise square of $\bbe$. The preceding arguments suffice to establish the following result on the convergence of $\Gamma[m,k]$.
\begin{myproposition}
In the general case, as $k\to\infty$ the test statistic sequence converges in distribution, namely
\begin{equation*}
\frac{\Gamma[m,k]-k^2 \|\bbe\|_2^2 - k\|\bm\sigma\|_1}{\sqrt{4k^3 \bm\sigma^\top \bbe^2 + 2k^2\|\bm\sigma\|_2^2}}\stackrel{D}{\to}y,    
\end{equation*}
where $y$ is a standard Gaussian random variable.
\end{myproposition}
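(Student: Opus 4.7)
The plan is to exploit the additive decomposition \eqref{eq:H_non_perfect}: setting $\bbu[t]:=\textrm{vec}[\textrm{triu}(\bbX\bbX^\top-\bbA[t])]$ --- an i.i.d., zero-mean sequence with covariance $\bm\Sigma_H$ --- and $\bbW_k:=\sum_{t=m+1}^{m+k}\bbu[t]$, one has $\bbs[m,k]=\bbW_k+k\bbe$ and hence
$$\Gamma[m,k]=\|\bbW_k\|_2^2+2k\,\bbe^\top\bbW_k+k^2\|\bbe\|_2^2.$$
The engine of the argument is the multivariate CLT, which gives $k^{-1/2}\bbW_k\stackrel{D}{\to}\bbZ\sim\ccalN(\bb0,\bm\Sigma_H)$; the rest is bookkeeping of orders of magnitude in $k$.

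Subtracting the mean-approximation $k^2\|\bbe\|_2^2+k\|\bm\sigma\|_1$ from \eqref{eq:mean_Gamma_error}, the numerator of the ratio splits as $T_1(k)+T_2(k)$, with
$$T_1(k):=\|\bbW_k\|_2^2-k\|\bm\sigma\|_1,\qquad T_2(k):=2k\,\bbe^\top\bbW_k.$$
Routine computation yields $\E{T_1(k)}=0$ and $\var{T_1(k)}=2k^2\|\bm\sigma\|_2^2$, so $T_1(k)=O_P(k)$. For the dominant piece, I write $T_2(k)=2k^{3/2}\bbe^\top(k^{-1/2}\bbW_k)$; applying the continuous mapping theorem to $\bbv\mapsto\bbe^\top\bbv$ together with the multivariate CLT, and using that $\bm\Sigma_H=\textrm{diag}(\bm\sigma)$ so $\bbe^\top\bm\Sigma_H\bbe=\bm\sigma^\top\bbe^2$, one gets
$$\frac{T_2(k)}{2k^{3/2}\sqrt{\bm\sigma^\top\bbe^2}}\stackrel{D}{\to}\frac{\bbe^\top\bbZ}{\sqrt{\bbe^\top\bm\Sigma_H\bbe}}\sim\ccalN(0,1).$$

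Assuming $\bbe\neq\bb0$, the denominator factors as $2k^{3/2}\sqrt{\bm\sigma^\top\bbe^2}\,(1+o(1))$, and $T_1(k)$ divided by that denominator is $O_P(k^{-1/2})\stackrel{P}{\to}0$. Slutsky's theorem then combines the two contributions and delivers the claimed standard-normal limit. The main delicate point is that $T_1(k)$ and $T_2(k)$ are not independent --- both are built from the same $\bbW_k$ --- but since $T_1(k)$ is of strictly smaller stochastic order than the normalizer, the dependence is irrelevant to the limit and no joint-CLT is needed. A caveat worth flagging is the degenerate case $\bbe=\bb0$ (perfect ASE), where the stated normalization collapses; then the numerator reduces to $T_1(k)$, whose limit is a centered quadratic form in $\bbZ$ (non-Gaussian) and is already captured by Proposition~\ref{prop:perfect}. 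A Berry--Esseen refinement of the underlying CLT, analogous to the remark after Proposition~\ref{prop:perfect}, could additionally provide an $O(k^{-1/2})$ rate for the Gaussian approximation when $\bbe\neq\bb0$.
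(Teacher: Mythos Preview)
Your argument is correct. The decomposition $\Gamma[m,k]=\|\bbW_k\|_2^2+2k\,\bbe^\top\bbW_k+k^2\|\bbe\|_2^2$, the identification of $T_2(k)=2k\,\bbe^\top\bbW_k$ as the $k^{3/2}$-order dominant piece, and the Slutsky combination with the $O_P(k)$ remainder $T_1(k)$ are all sound. One small point: the exact equality $\var{T_1(k)}=2k^2\|\bm\sigma\|_2^2$ holds only for Gaussian $\bbW_k$; for the Bernoulli innovations there is an $O(k)$ correction from fourth cumulants, but this does not affect $T_1(k)=O_P(k)$, which is all you use.

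The paper takes a slightly different (and less formal) route: it first invokes the CLT to replace $\bbs[m,k]$ by a Gaussian $\ccalN(k\bbe,k\bm\Sigma_H)$, then represents $\Gamma[m,k]$ as the weighted sum of non-central $\chi^2(1,b_l^2)$ variables in \eqref{eq:gamma_dist_non_perfect}, reads off the mean \eqref{eq:mean_Gamma_error} and variance \eqref{eq:var_Gamma_error}, and asserts that ``the preceding arguments suffice'' for the normal limit. Your direct decomposition-plus-Slutsky argument bypasses the non-central chi-square intermediary and makes the limit rigorous rather than heuristic; it also cleanly isolates the caveat $\bbe\neq\bb0$ that the paper leaves implicit. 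Conversely, the paper's chi-square representation is what yields the finite-$k$ approximate distribution used later for threshold selection, so the two presentations are complementary.
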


Apparently, we need to choose $\omega[k] = (rk^{3/2})^{-1}$ to control the variance of the weighted statistic. This is because for large $k$, the term that dominates the variance expression \eqref{eq:var_Gamma_error} grows like $k^3$ [cf. $k^2$ in \eqref{eq:mean_var_Gamma}]. The detection threshold $c_{\alpha}[k]$ is thus set as the $(1-\alpha)$-quantile of the generalized chi-squared distribution defined in \eqref{eq:gamma_dist_non_perfect}, after weighting. We note that the resulting cumulative distribution function has a complex form which requires numerical integration to compute the desired quantiles; see also~\cite{cdf_approx_1,cdf_approx_2} for classic formulae to approximate said distribution function. As the next example shows, for particular cases the resulting distribution simplifies.
\begin{myexample}\label{Ex:ER}\normalfont
For an ER model with connection probability $p$ we have $p_l=p$ for all $l=1,\dots,r$ and \eqref{eq:gamma_dist_non_perfect} simplifies to
\begin{gather}\label{eq:ER_nonperfect}
\bar{\Gamma}_{\text{ER}} =
kp(1-p)u,\:\textrm{ with } u\sim \chi^2\left(r, \frac{k}{p(1-p)}\|\bbe\|_2^2\right).
\end{gather}
%
\end{myexample}

Alternatively, for threshold selection we will often resort to the mean plus three standard deviations
\begin{equation}\label{eq:threshold_simple}
    \text{th}[k] := \omega[k] \mathbb{E}_{bc}[k] + 3\sqrt{\omega^2[k] \text{var}_{bc}[k]}, 
\end{equation}
where $\mathbb{E}_{bc}$ is the expectation of the statistic before the change and $\text{var}_{bc}$ is its variance; given by~\eqref{eq:mean_Gamma_error} and \eqref{eq:var_Gamma_error}, respectively, using a suitable estimate of $\bbe$ described in Section \ref{ssec:imp_details} Numerical tests in Section \ref{ssec:synthetics} corroborate that this rule of thumb works well for all practical CPD purposes and it comes close to the true $0.99$-quantile. Moreover, having an analytic threshold expression facilitates studying the detection resolution of the online CPD procedure, the subject of the next section.

\subsection{Change detectability analysis}\label{subsec:power}

Let us examine what changes are detectable by the proposed online CPD algorithm, when using the simple thresholding rule $\textrm{th}[k]$ based on the derived mean and variance of the statistics's null distribution. To this end, we will assume that from a certain change-point $k=k_c$ onward, the sequence of graphs is generated by an RDPG with latent vectors $\bbY$ so that $\bbDelta:=\bbX\bbX^\top-\bbY\bbY^\top$ (i.e., the change is manifested through a perturbation on the resulting probability matrix). Given the expressive power of RDPGs~\cite{priebe2018survey}, the modeling assumption for $k\geq k_c$ comes with limited loss of generality. Henceforth, let $\bbdelta:=\textrm{vec}\left[\textrm{triu}\left(\bbDelta\right)\right]$.

If we are at a certain time $k>k_c$, the partial sum of the monitoring function is then (recall $\bbE = \hbX\hbX^\top- \bbX\bbX^\top$)
\begin{align*}
    \bbs[m,k] = {} &\sum_{t=m+1}^{m+k}\bbh\left(\bbA[t],\hbX\right) \\
    = {} &\sum_{t=m+1}^{m+k_c-1}\bbh\left(\bbA[t],\bbX\right)+\sum_{t=m+k_c}^{m+k}\bbh\left(\bbA[t],\bbY\right)\\
    {}&+k \bbe+(k-k_c)\bbdelta.
\end{align*}
Similar to the previous section, for large $k_c$ and $k$ we obtain a Gaussian vector with independent entries; mean $k\bbe + (k-k_c)\bbdelta$ and covariance matrix $k_c\diag[\bm\sigma_X] + (k - k_c)\diag[\bm\sigma_Y]$, where $\bm\sigma_X$ and $\bm\sigma_Y$ are the auxiliary vectors defined in \eqref{eq:var_Gamma_error} corresponding to $\bbX$ and $\bbY$, respectively. This results in a CUSUM statistic with mean approximately equal to 
\begin{align}
    \E{\Gamma[m,k]}\approx {}& \left\|k\bbe+(k-k_c)\bbdelta\right\|^2_2 \nonumber\\
    {}&+ k_c\|{\bm\sigma_X}\|_1 + (k-k_c)\|{\bm\sigma_Y}\|_1.
    \label{eq:mean_after_change}
\end{align}
In the long run as $k\to\infty$, the dominant term will be the first one, which when weighted by $\omega[k] = (rk^{3/2})^{-1}$ amounts to $\omega[k]\E{\Gamma[m,k]}\approx k^{1/2}\left\|\bbe+\bbdelta\right\|^2_2/r$. Given that $\omega[k]\Gamma[m,k]$ has finite variance and that on this asymptotic regime $\textrm{th}[k]\approx k^{1/2}\|\bbe\|^2_2/r$ plus a constant, we have established that changes are detectable as long as
\begin{gather}\label{eq:power}
    \|\bbe+\bbdelta\|^2_2> \|\bbe\|_2^2 \:\Rightarrow 2\|\bbe\|_2\cos{\theta}+\|\bbdelta\|_2>0,
\end{gather}
where $\theta$ is the angle between $\bbe$ and $\bbdelta$. It thus follows that a large value of $\|\bbdelta\|_2$ aids detectability, as expected. The same happens for small values of the estimation error magnitude $\|\bbe\|_2$, and in the idealized perfect estimation scenario we find all changes will be detected in the long run. Naturally, condition  \eqref{eq:power} is sufficient for changes to be detected, but not necessary. On the imperfect scenario, the resulting model estimation error will result in small changes likely going undetected provided $\theta\in(\frac{\pi}{2},\frac{3\pi}{2})$. On top of this angular requirement, a change may be missed when the ``perturbation-to-imperfection'' ratio is small, i.e., $\frac{\|\bbdelta\|_2}{\|\bbe\|_2}<2|\cos{\theta}|$.  

The following simple example offers additional insights on the feasibility of the condition \eqref{eq:power}.
\begin{myexample}\label{Ex:ER_detectability}\normalfont
Consider a sequence of ER graphs with connection probability $p$, which at a certain time-step $k_c$ changes to $q=p-\Delta$. In Appendix \ref{app:bound} we show that the following bound 
\begin{gather*}
    \hspace{-4cm}\Pc{\|\bbe+\bbdelta\|^2_2> \|\bbe\|_2^2}\geq \\1 - \frac{8(1-p)}{\Delta^2N^2(N-1)m}\left[\frac{1-p}{Nm}+2(N-1)p\right]
\end{gather*}
on the probability of satisifying the detectability condition \eqref{eq:power}
holds asymptotically in $N$. This means that if $\Delta^2N^2m$ goes to infinity as $N$ grows, then the change will be detected with high probability. In other words, the method detects changes $\Delta$ up to an order of $N^{-1}m^{-1/2}$. This example further illustrates that Algorithm \ref{A:online_CPD}'s performance improves with growing $m$ (the size of the training set) as well as $N$ (the number of nodes).
\end{myexample}

\subsection{Further implementation details}\label{ssec:imp_details}

We close this section with some necessary implementation details for Algorithm \ref{A:online_CPD}. These pertain to the calculation of the threshold and the possibility of utilizing windowed statistics as alternatives to the the cumulative sum \eqref{eq:cusum}. \vspace{2pt}

\noindent \textbf{Threshold calculation.} The procedure outlined in Section \ref{ssec:statistical_null} requires prior knowledge on the values of $\bbP$ and $\bbe$ in order to set the threshold $c_{\alpha}[k]$. This will be the case if one uses the exact $(1-\alpha)$-quantile of the null distribution, approximate formulae, or, simply $\text{th}[k]$ in \eqref{eq:threshold_simple}. In most applications the values of $\bbP$ and $\bbe$ are unknown, so it is necessary to estimate them from the observations in the training set. 

For $\bbP$ we simply use the plugin estimator $\hbP = \hbX\hbX^\top$, i.e., we estimate $\bbP$ using the ASE of $\barbA$ in \eqref{eq:mean_adjacency}, computed over the training set. 
Characterization of the statistical properties of $\bbE$ (and subsequently $\bbe$) is challenging in general. Even for the simple ER model, the study of $\bbE$ is non-trivial as shown in Appendix \ref{app:bound}. Therefore, we opted for a data-driven approach to form point estimates of $\bbE$ by performing ``leave-one-out'' passes over the training set: we randomly select an index $j$ in $1,\dots, m$ and compute the ASE of $\bbA[j]$ and of
\begin{gather*}
	\displaystyle \barbA_{(-j)}= \frac{1}{m-1}\sum_{\substack{t=1\\t\neq j}}^{m} \bbA[t],
\end{gather*}
the mean adjacency matrix over the left-out samples. We denote these ASEs as $\hbX_j$ and $\barbX_{(-j)}$, respectively.
Because $\var{\barbX_{(-j)} \barbX_{(-j)}^\top - \bbP} = \var{\hbX_j \hbX_j^\top - \bbP}/(m-1)$ as discussed in Remark \ref{rem:var_red} and~\cite{tang2018connectome}, we compute 
\begin{gather*}
	\bbE_j = \frac{\hbX_j\hbX_j^\top - \barbX_{(-j)} \barbX_{(-j)}^\top}{\sqrt{m-1}},
\end{gather*}
a fixed number of times, obtain a set of values $\bbE_j$, and estimate a ``worst-case'' $\hbE$ via the 0.99-quantile of this set. 

Note that the change detectability of the algorithm depends on the value of $\hbe$ and how close it is to $\bbe$. In particular, the relevant condition \eqref{eq:power} in practice becomes 
$\|\hbe\|^2<\|\bbe+\bbdelta\|^2_2.$\vspace{2pt}

\noindent \textbf{Finite memory statistics.} The CUSUM statistic $\Gamma[m,k]=\|\bbs[m,k]\|_2^2$ we have dealt with so far is based on the partial sum $\bbs[m,k] = \sum_{t=m+1}^{m+k}\bbh(\bbA[t],\hbX)$. As discussed in Remark \ref{rem:complexity}, it can be computed in a recursive and memory-efficient fashion that is ideal for online operation. Moreover, such an infinite-memory statistic accrues information from the entire data stream  $\{\bbA[m+k]\}_{k\geq 1}$, which is beneficial when it comes to invoking asymptotic approximations to the null distribution as in Section \ref{ssec:statistical_null}. However, if the change point $k_c$ occurs rather late during the monitoring horizon, then the inertia effect induced by a lengthy history of nominal graph observations will translate to longer detection delays. 

To attain faster reaction times one can resort to alternative finite memory statistics, which tend to rely on a judicious subset of the most recent observations. One natural variant is to adopt a fixed-length sliding window statistic, where the partial sum is $\bbs[k-L,k]$ for given window length $L$. At time $k$, this moving sum (MOSUM) statistic discards past data in the interval $(m,k-L)$, and its computation requires storing the last $L$ graphs in the sequence; see also \eqref{eq:mMOSUM} and~\cite{kirch2018sequentialCPD} for a modified version where the window length grows proportionally with $k$. Another useful procedure stems from the exponentially-weighted sum (EWSUM) statistic, namely
\begin{equation}\label{eq:ewsum}
\bbs_{\beta}[m,k] = \sum_{t=m+1}^{m+k}\beta^{m+k-t}\bbh\left(\bbA[t],\hbX\right),
\end{equation}
where $\beta\in(0,1]$ is a so-termed forgetting factor. EWSUM coincides with CUSUM for $\beta=1$, whereas for $\beta<1$ past samples are exponentially down-weighted and thus it offers a faster response to changes. Similar to CUSUM, \eqref{eq:ewsum} can be recursively updated as $\bbs_{\beta}[m,k]=\beta \bbs_{\beta}[m,k-1]+\bbh(\bbA[k],\hbX)$ and does not require storing any of the past measurements. Notice that as long as the window length is long enough  we may still use the results derived in Section \ref{ssec:statistical_null}, and the only algorithmic difference is that the weight $\omega[k]$ and the threshold $c_\alpha[k]$ should be changed accordingly (e.g., $\omega[k]=(r\min\{k,L\}^{3/2})^{-1}$ in the MOSUM case). The effect of choosing different windowed statistics is studied in the numerical tests of Section \ref{sec:simus}.

\section{Directed and Weighted Graphs}\label{sec:directed_weighted}

\subsection{Directed RDPG}\label{subsec:directed_rdpg}

As introduced in Section \ref{sec:rdpg}, the RDPG model is only suitable for undirected graphs. Indeed, $\bbX\bbX^\top=\bbP$ is always symmetric. For digraphs, edges (or arcs) are defined as \emph{ordered} pairs $(i,j)$, with $i,j\in \ccalV$. Since edges $(i,j)$ and $(j,i)$ are different objects, so could be the probabilities $P_{ij}$ and $P_{ji}$. By convention, we say $(i,j)$ starts from $i$ and points to $j$.\vspace{2pt} 

\noindent \textbf{Model specification.} Digraphs require an adaptation to the RDPG model, where each node $i\in \ccalV$ has an associated column vector $\bbx_i$ -- now in $\reals^{2d}$~\cite{priebe2017semiparametric}. Let us denote by $\bbx^l_i$ and $\bbx^r_i$ the first and last $d$ entries of $\bbx_i$, respectively. Likewise, let $\bbX^l,\bbX^r\in\reals^{N\times d}$ be the matrices stacking the transposed nodal vectors as their rows. In direct analogy to the undirected case, we define the directed RDPG (D-RDPG) model as
\begin{equation}\label{eq:drdpg_distribution}
\Pc{\bbA\given\bbX} = \prod_{i\neq j} [(\bbx_i^l)^\top\bbx_j^r]^{A_{ij}}[1-(\bbx_i^l)^\top\bbx_j^r]^{1-A_{ij}}
\end{equation}
[cf. the product over all $i\neq j$ here versus $i<j$ in \eqref{eq:rdpg_dist}], and the \emph{asymmetric} matrix of connection probabilities now becomes
\begin{gather}\label{eq:drdpg}
	\bbP = \bbX^l(\bbX^r)^\top.
\end{gather}
Intuitively, we say $\bbx_i^l$ models node $i$'s outgoing connectivity and $\bbx_i^r$ its incoming one. The probability of existence of the arc $(i,j)$ is given by $(\bbx^l_i)^\top\bbx^r_j$. 

Note that the rotational ambiguity is still present.
Actually, the ambiguity is exacerbated in this case because \emph{any invertible} matrix $\bbW\in\reals^{d\times d}$ will result in the same $\bbP$. Indeed, consider $\bbX^l\bbW$ and $\bbX^r\bbW^{-\top}$ and note that $\bbX^l\bbW(\bbX^r\bbW^{-\top})^\top=\bbX^l\bbW\bbW^{-1}(\bbX^r)^\top=\bbX^l(\bbX^r)^\top=\bbP$. Thus, as introduced the D-RDPG model \eqref{eq:drdpg} will be challenging to interpret, particularly when it comes to comparing two digraphs via their corresponding embeddings. This last task is critical when it comes to CPD. In order to have roughly the same level of ambiguity as in the undirected RDPG case, we will henceforth require that the $d$ columns of both $\bbX^l$ and $\bbX^r$ are orthogonal vectors (i.e., $(\bbX^l)^\top\bbX^l$ and $(\bbX^r)^\top\bbX^r$ are $d\times d$ diagonal matrices). This extra constraint does not fundamentally limit the expressiveness of the model because $\bbP$ is still of rank $d$.

All in all, we are left with the same rotational ambiguity as in the vanilla RDPG model, in addition to a scaling one. Indeed, consider a diagonal matrix $\textrm{diag}(\bbalpha)$ with non-zero entries and let $\bbW$ be an orthogonal  matrix. Then it follows that $\bbX^l\bbW\textrm{diag}(\bbalpha)$ and $\bbX^r\bbW\textrm{diag}(\bbalpha)^{-1}$ (which still have orthogonal columns) will produce the same $\bbP$ as \eqref{eq:drdpg}. Consequently, comparing the magnitude of $\bbx_i^l$ with that of $\bbx_i^r$ is meaningless. This scaling ambiguity, which to the best of our knowledge was overlooked before, will challenge CPD if one is interested in the behavior in a single direction (either incoming or outgoing). This is an interesting extension we will leave for future work. \vspace{2pt} 

\noindent \textbf{D-RDPG inference.} Let us now discuss how to estimate the matrices $\bbX^l$ and $\bbX^r$ from a graph observation. Since $\bbP=\E{\bbA}$ still holds, we seek a pair $\{\hbX^l,\hbX^r\}$ with orthogonal columns such that $\hbX^l(\hbX^r)^\top$ is the best rank-$d$ approximant of $\bbA$. Letting $\bbA=\bbU\bbD\bbV^\top$ be the singular-value decomposition (SVD) of $\bbA$, we set
\begin{gather}\label{eq:dase}
	\hbX^l = \hbU\hbD^{1/2}\text{ and }\hbX^r = \hbV\hbD^{1/2}. 
\end{gather}
Note that \eqref{eq:dase} satisfies the required orthogonality constraint. The choice in terms of scaling and counterscaling of columns is arbitrary. Choosing $\hbD^{1/2}$ assumes an even contribution of the incoming and outgoing connectivity of incident nodes to edge generation, which seems reasonable in lieu of any additional prior information. The scaling ambiguity is inconsequential to digraph CPD if we adopt the monitoring function $\bbH(\bbA[t],\{\hbX^l,\hbX^r\})=\hbX^l(\hbX^r)^\top-\bbA[t]$.

\subsection{Weighted RDPG}\label{subsec:weighted_rdpg}

We now shift our focus to weighted, undirected graphs. Let us then define a positive weight for each edge through a map $w:\ccalE\mapsto \reals_+$ such that $A_{ij}=A_{ji}=w_{ij}$ for $(i,j)\in\ccalE$. The absence of an edge is encoded as $A_{ij}=A_{ji}=0$. Naturally, an unweighted graph is a particular case of a weighted graph where the edge weights are $0$ or $1$ (i.e., $w\equiv 1$). 

A couple of works have proposed similar adaptations of the vanilla RDPG model to the weighted case; see~\cite{deford2016random,tang2017robust}. The basic ideas therein are outlined next. Suppose that the (possibly weighted) adjacency entries are generated from a given parametric distribution $F_{\bbtheta}(A_{ij})$ with $\bbtheta\in\reals^L$, for instance $\theta=\lambda$ for a Poisson$(\lambda)$ distribution. Each node $i\in \ccalV$ now has $L$ latent vectors $\bbx_i[l]\in \reals^{d_l}$ ($l=1,\ldots,L$), such that the weight $A_{ij}$ between nodes $i$ and $j$ is random with parametric distribution $F_{(\bbx_i^\top[1]\bbx_j[1],\ldots,\bbx_i^\top[L]\bbx_j[L])}(A_{ij})$, independently of all other edges. The distribution may have mass at $A_{ij}=0$ to capture sparse graphs where some pairs of nodes will be not be joined by edges. One recovers the vanilla RDPG by letting $F_{\bbtheta}(A_{ij})$ be a $\textrm{Bernoulli}(\theta)$ distribution. 

This approach has several drawbacks. For starters, all edges are required to have the same weight distribution, albeit with different parameters. This limitation may be partially overcome by considering a mixture distribution. However, and limiting even more its applicability, $F_{\bbtheta}(A_{ij})$ has to be chosen \emph{a priori}. So if edges have different weight distributions, we would have to know which of them adhere to each distribution (and what these distributions are) prior to inference. \vspace{2pt} 

\noindent \textbf{Model specification.} We propose instead that the sequence of vectors associated with each node is related to the moment generating function (MGF) of the weight distribution. 
In particular, each node has a sequence of latent positions $\bbx_i[l]\in\reals^{d_l}$ that determine the $l$-th moments of the weighted adjacency matrix as $\E{A_{ij}^l}=\bbx_i^\top[l]\bbx_j[l]$, for $l\in \naturals_+$.Given the sequence $\bbX:=\{\bbX[l]\}_l$, with $\bbX[l]=[\bbx_1[l],\ldots, \bbx_N[l]]^\top\in \reals^{N\times d_l}$, our weighted RDPG (W-RDPG) model specifies the MGF of the adjacency matrix as
\begin{equation}\label{eq:mgf}
	\E{e^{tA_{ij}}|\bbX} = \sum_{l=0}^\infty \frac{t^l\E{A_{ij}^l}}{l!} = 1+\sum_{l=1}^\infty \frac{t^l\bbx_i^\top[l]\bbx_j[l]}{l!}
\end{equation}
and the entries $A_{ij}$ are independent, i.e., edge independent. One can recover the vanilla RDPG by setting $\bbx_i[l]=\bbx_i\,\forall\: l$, where $\bbx_i$ is the vector associated to node $i$ in \eqref{eq:rdpg_dist}. \vspace{2pt}

\noindent \textbf{W-RDPG inference.}
Vectors $\bbx_i[l]$ are estimable via an ASE of matrix $\bbA^{(l)}$, where $\bbA^{(l)}$ denotes the entry-wise $l$-th power of adjacency matrix $\bbA$. The following theorem establishes the consistency (up to an unknown rotation) of this estimator, under some minor eigengap assumptions for the inner product matrices $\bbX[l]\bbX^\top[l]$.

\begin{mytheorem}\label{theorem:wrdpg}
Let $\bbB \in \reals^{N\times N}$ be a random, symmetric, and hollow matrix. Suppose that $0 \leq B_{ij} < M$ for some $M>0$ and that $\{B_{ij}\}_{i<j}$ are independent with $\E{B_{ij}} = P_{ij}$, where $\bbP=\bbX \bbX^\top$ for some fixed $\bbX \in \reals^{N\times d}$. Suppose that $\rank(\bbP) = d$ and that $\bbP$ has $d$ distinct positive eigenvalues $\lambda_1 > \lambda_2 > \ldots > \lambda_d>0$ that satisfy
\begin{gather*}
    \min_{i \neq j} |\lambda_i - \lambda_j| > \delta N  \text{ and } \lambda_d > \delta N
\end{gather*}
for some $\delta >0$.

Let $\hbX\in \reals^{N\times d}$ denote the ASE of $\bbB$, where it is assumed that the latent space dimension $d$ is known. Then almost surely there exists an orthogonal matrix $\bbW \in \reals^{d\times d}$ such that, for each $i \in \{1,\ldots,N\}$ and all $\gamma < 1$,
\begin{gather*}
    \mathbb{P}\left[||(\hbX\bbW)_{i\cdot} - (\bbX)_{i\cdot}||_2^2 > N^{-\gamma}\right] = o\left(N^{\gamma-1}\log N\right)
\end{gather*}
where $(\bbC)_{i\cdot}$ denotes the $i$-th row of matrix $\bbC$.
\end{mytheorem}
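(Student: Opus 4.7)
The plan is to adapt the classical row-wise ASE consistency argument for Bernoulli RDPGs (see~\cite{priebe2018survey} and references therein) to the present bounded-moment setting. The essential observation is that the proofs in the Bernoulli case use only two facts about the entries of $\bbA$: their independence and their boundedness, the Bernoulli distribution itself is never used. Since the hypotheses of the theorem give independence of $\{B_{ij}\}_{i<j}$, boundedness $0\leq B_{ij}<M$, and the same moment identity $\E{B_{ij}}=P_{ij}$ with $\bbP=\bbX\bbX^\top$, the architecture of the Bernoulli proof carries over. My plan is therefore to (i) establish spectral-norm concentration of $\bbB-\bbP$, (ii) transfer this to an eigenspace perturbation bound via Davis--Kahan and Weyl, and (iii) upgrade it to a row-wise bound by a Hoeffding-type argument applied to the leading linearization.

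First, I would invoke a matrix Bernstein/Hoeffding inequality for the hollow symmetric matrix $\bbB-\bbP$. Each entry is independent, centered, and almost-surely bounded by $M$; the matrix variance is $O(N)$ since $\var{B_{ij}}\leq M^2$. This yields $\|\bbB-\bbP\|_{op}=O(\sqrt{N\log N})$ with probability at least $1-O(N^{-c})$ for any desired $c$. Combining this with the eigengap assumption $\min_{i\neq j}|\lambda_i-\lambda_j|>\delta N$ and the lower bound $\lambda_d>\delta N$ via Weyl and Davis--Kahan yields an orthogonal $\bbW\in\reals^{d\times d}$ such that $\|\hbV\bbW-\bbV\|_F=O(\sqrt{\log N/N})$ and $|\hat\lambda_i-\lambda_i|=O(\sqrt{N\log N})$, which then gives $|\hat\lambda_i^{1/2}-\lambda_i^{1/2}|=O(\sqrt{\log N/N})$ using $\lambda_i\asymp N$.

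Second, and this is the heart of the argument, I would perform the standard linearization
\begin{equation*}
\hbX\bbW-\bbX \;=\; (\bbB-\bbP)\bbV\bbLambda^{-1/2}\bbW \;+\; \bbR,
\end{equation*}
derived from $\hbX=\bbB\hbV\hbLambda^{-1/2}$ and $\bbX=\bbP\bbV\bbLambda^{-1/2}$ up to the rotation $\bbW$. Fix a row index $i$. The $i$-th row of the leading term equals $\sum_{j\neq i}(B_{ij}-P_{ij})\,\bbv_j^\top\bbLambda^{-1/2}\bbW$, where $\bbv_j^\top$ denotes the $j$-th row of $\bbV$. Since $\|\bbv_j^\top\bbLambda^{-1/2}\|_2=O(1/\sqrt{N})$ uniformly in $j$ (because $\lambda_d\gtrsim N$ and $\bbV$ has orthonormal columns), and the summands are independent bounded centered scalars, an entrywise Hoeffding bound together with a union bound over the $d$ coordinates yields that the squared row norm of the leading term is $O(\log N / N)$ with probability $1-o(N^{\gamma-1}\log N)$, for any $\gamma<1$.

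The main obstacle is controlling the remainder $\bbR$, which collects higher-order perturbation terms coming from $\hbV-\bbV\bbW$ and $\hbLambda^{-1/2}-\bbW^\top\bbLambda^{-1/2}\bbW$ along with the cross terms. I would bound $\bbR$ by a bootstrapped argument: apply the spectral bound from Step~1 to show $\|\bbR\|_F=o(1)$, and then exploit the two-to-infinity norm technology (Cape, Tang, Priebe) to show that each row of $\bbR$ is of order $o(N^{-\gamma/2})$ with the required probability. This reduces to bounding $\|(\bbB-\bbP)(\hbV\bbW-\bbV)\|_{2\to\infty}$ and $\|(\bbB-\bbP)\bbV\|_{2\to\infty}$, both of which again succumb to Hoeffding-type inequalities given the bounded and independent nature of the entries of $\bbB-\bbP$. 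Chaining these row-wise bounds with the probability control from Step~1 and taking a union bound over the $d$ eigen-directions yields the stated tail bound $\Pc{\|(\hbX\bbW)_{i\cdot}-\bbX_{i\cdot}\|_2^2>N^{-\gamma}}=o(N^{\gamma-1}\log N)$. The price of generality over Bernoulli entries is merely a constant depending on $M$ in the Hoeffding/Bernstein bounds, so the final rates are unchanged.
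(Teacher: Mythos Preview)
Your plan is sound, but it follows a genuinely different route from the paper's. The paper adapts Sussman et~al.\ (2014) almost verbatim: it does \emph{not} control $\|\bbB-\bbP\|_{op}$ via matrix Bernstein, nor does it linearize and apply row-wise Hoeffding. Instead it \emph{squares} both matrices and bounds the Frobenius norm $\|\bbB^2-\bbP^2\|_F=O(M^2N^{3/2}\sqrt{\log N})$ by applying scalar Hoeffding to each entry $(\bbB^2-\bbP^2)_{ij}=\sum_{k\neq i,j}(B_{ik}B_{kj}-P_{ik}P_{kj})+\text{bdd}$, a sum of $N-2$ independent bounded terms. Because the eigenvectors of $\bbB$ and $\bbB^2$ coincide while the eigengap of $\bbP^2$ is $\delta^2N^2$, the Frobenius Davis--Kahan variant of Yu et~al.\ applied to $\bbB^2,\bbP^2$ then gives $\|\hbV_B-\hbV_P\|_F=O(\sqrt{\log N/N})$. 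This is upgraded to $\|\hbX\bbW-\bbX\|_F\leq Cd\sqrt{\log N}$, and the per-row probability statement is read off by the Markov/counting device of Sussman et~al., not by a direct row-wise concentration argument.

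Your approach trades the squaring trick and Frobenius Davis--Kahan for operator-norm concentration plus the Cape--Tang--Priebe $2\!\to\!\infty$ machinery; this is heavier but delivers genuinely uniform row control rather than an average-row bound, and avoids the somewhat artificial passage through $\bbB^2$. One wrinkle in your sketch: the claim that $\|\bbv_j^\top\bbLambda^{-1/2}\|_2=O(N^{-1/2})$ \emph{uniformly in $j$} does not follow from ``$\bbV$ has orthonormal columns''---that only gives $\sum_j\|\bbv_j\|_2^2=d$. Fortunately you do not need it: the row-wise Hoeffding step for the leading term $(\bbB-\bbP)\bbV\bbLambda^{-1/2}\bbW$ only requires the \emph{column} sums $\sum_j[(\bbV\bbLambda^{-1/2}\bbW)_{jk}]^2=(\bbW^\top\bbLambda^{-1}\bbW)_{kk}\leq\lambda_d^{-1}=O(N^{-1})$, which does follow from orthonormality and the eigenvalue lower bound. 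Rephrase that justification and the argument goes through.
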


The relevance to W-DRPG inference is that Theorem~\ref{theorem:wrdpg} can be applied, for each fixed $l$, to $\bbB = \bbA^{(l)}$ to ensure that the latent position matrix $\bbX[l]$ can be consistently recovered (\textit{modulo} an orthogonal transformation) via the ASE of $\bbA^{(l)}$. 

Theorem \ref{theorem:wrdpg} is an extension of~\cite[Theorem 4.1]{sussman2014consistent}, so in Appendix \ref{ProofTheoremWRDPG} we sketch how the proof therein can be adapted to our setting. The main differences with the result in ~\cite{sussman2014consistent} are that in our setting, \textit{a}) latent positions are not random, and \textit{b}) entries $B_{ij}$ of matrix $\bbB$ are not necessarily Bernoulli random variables; we only assume that they are bounded and their expectation is given by the dot product of the corresponding latent positions. We remark that \textit{b}) is a more general setup than that of \cite{sussman2014consistent}. 
Extending the W-RDPG model to accommodate random latent positions remains an open direction we will pursue as part of our future work.



\begin{figure}
	\centering
	\includegraphics[width=0.15\textwidth]{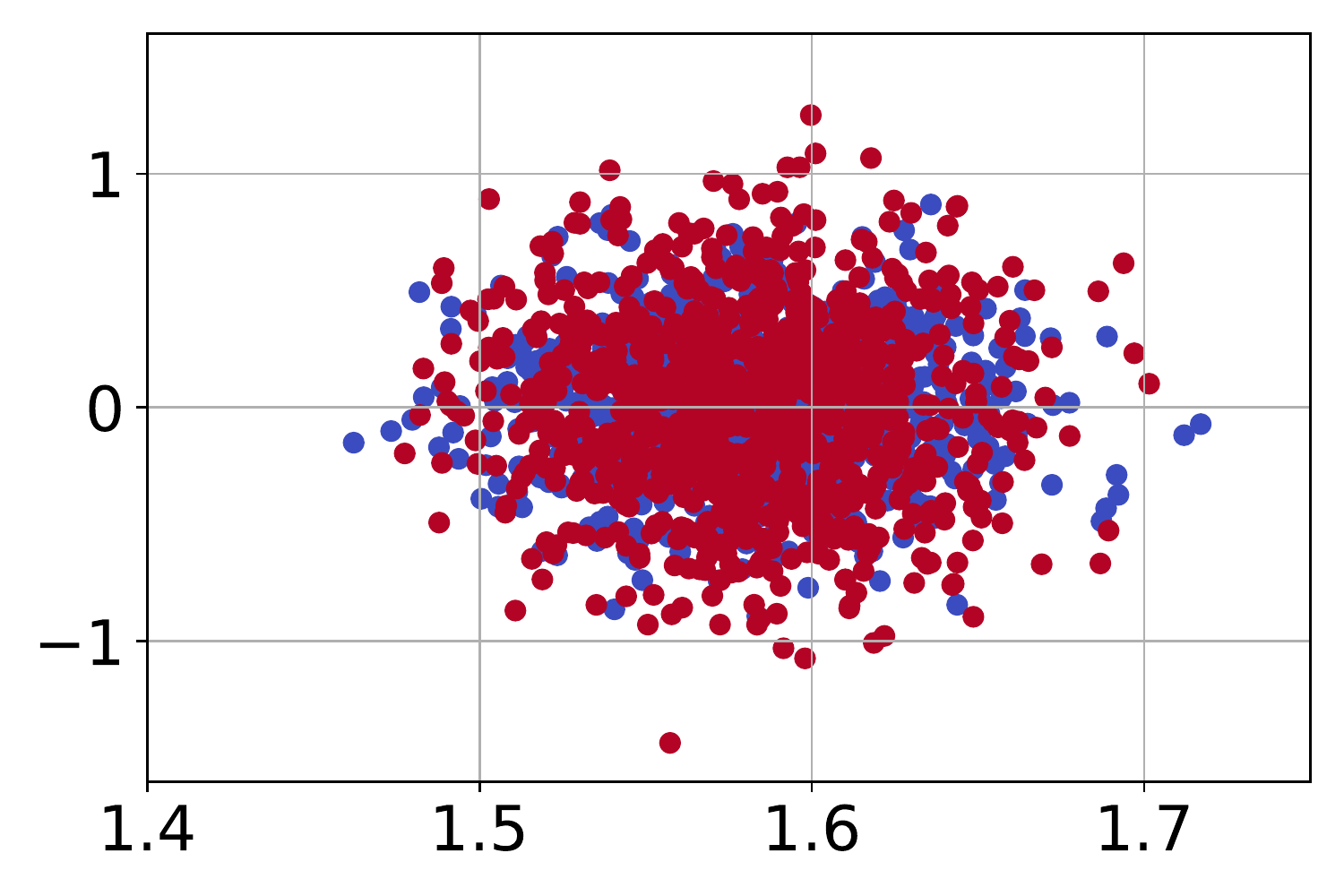}
	\includegraphics[width=0.15\textwidth]{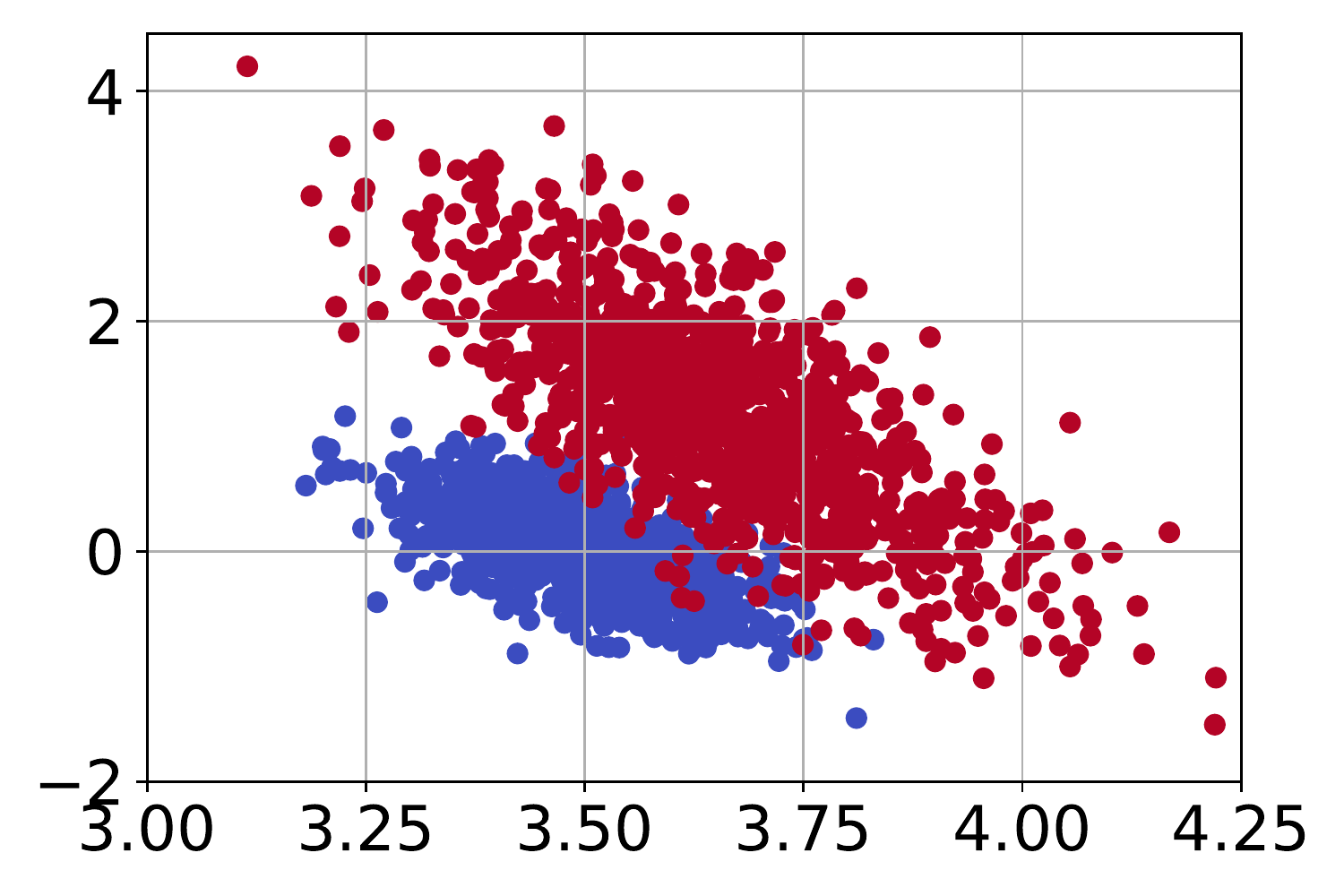}
	\includegraphics[width=0.15\textwidth]{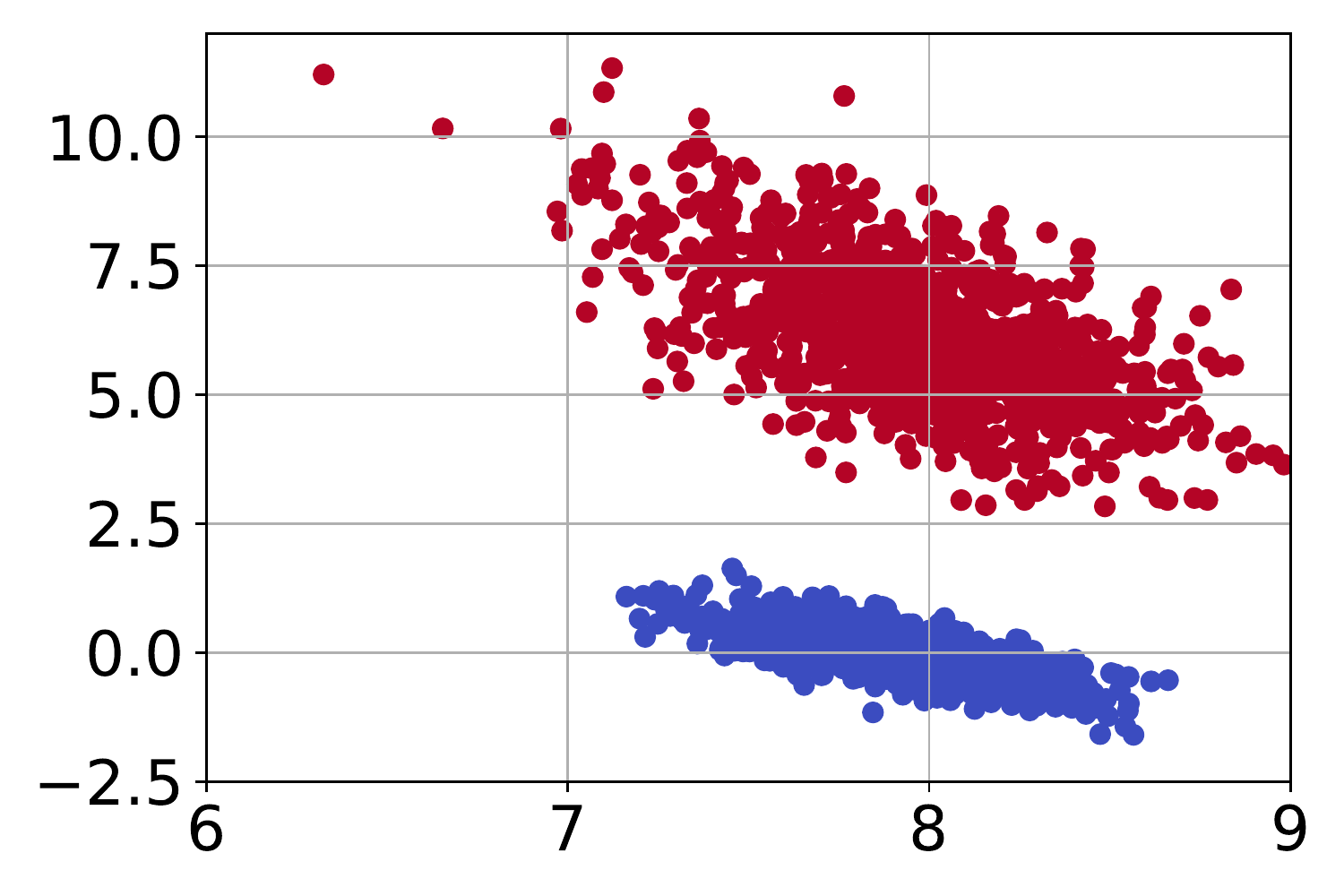}
	\caption{ASE embedding of $\bbA^{(l)}$ for Gaussian ($\mu=5$ and $\sigma=0.1$; in blue) and Poisson ($\lambda=5$; in red) distributed weights for $d_l=2$ and $l=1$ (left), $l=2$ (center), and $l=3$ (right). Nodes with different weight distributions are clearly revealed for $l=3$, but they overlap for $l=1$.}
	\vspace{-0.3cm}
	\label{fig:embedding_norm_poisson}
\end{figure}
\begin{myexample}\label{Ex:SBM}\normalfont
To illustrate the discriminative power of this novel embedding, we consider a two-block weighted SBM graph with $N=2000$ nodes. Edges are formed with fixed probability $p=0.5$, but weights are Gaussian with mean $\mu=5$ and standard deviation $\sigma=0.1$ for all edges except between a group of $1000$ nodes indexed as $i=1001,\ldots,2000$, where the weights' distribution is Poisson with parameter $\lambda=5$. 
As discussed in Example \ref{Ex:ER_SBM_Lantent}, in this case matrix $\bbX[l]$ will have at most $2$ different columns for all $l$. The vectors $\hbx_i[l]$ corresponding to the ASE for $l=1,2,3$ and $d_l=2$ are shown in Figure~\ref{fig:embedding_norm_poisson}, where each community is colored differently. 

Note how the nodes are indistinguishable for $l=1$. Indeed, the $\hbx_i[1]$ vectors are, as expected, centered around $(\sqrt{\mu p},0)=(\sqrt{\lambda p},0)\approx (1.58,0)$ corresponding to the mean weight. For $l=2$, Figure~\ref{fig:embedding_norm_poisson} (center) shows the vectors start to separate into the corresponding communities. Expressions for higher-order embeddings are easily obtained for this toy example. For instance, arbitrarily assuming that the $\bbx_i[l]$ lie on the abscissa for $i=1,\ldots,1000$ (recall that any rotation of $\bbX[l]$ will result in the same expected weights), it thus follows
%
%
%
\begin{equation*}
{\small
\bbx_i[2]=\left\{\begin{array}{ll}(\sqrt{p(\mu^2+\sigma^2)},0)& i\leq 1000,\\ (\sqrt{p(\mu^2+\sigma^2)},\sqrt{p(\lambda^2+\lambda-(\mu^2+\sigma^2)})& i> 1000.\end{array}\right.
}
\end{equation*}
Indeed, the estimates are around $(3.55,0)$ and $(3.55,1.58)$ respectively, although the noise corrupting the estimates hinders the ability to distinguish both distributions. For $l=3$, where the skewness of the distribution comes into play, vectors are clearly separated into the two groups; see Figure~\ref{fig:embedding_norm_poisson} (right).
\end{myexample}

\subsection{Online change-point detection}\label{ssec:ocpd_d/wrpg}

Let us briefly discuss how to perform online CPD for the general weighted and/or directed case. Extending the results presented in Section \ref{sec:online_cpd} to digraphs is straightforward. The only noteworthy difference is that, since the adjacency matrices are no longer symmetric, we need to consider entries from the entire residual matrix $\bbH$ (except the diagonal) during online monitoring, instead of the upper triangular half in \eqref{eq:vec_triu_h}. 

The path forward in the weighted case is also clear. The important difference is that the variance of each $A_{ij}$ is no longer of the form $p_{ij}(1-p_{ij})$, because we are naturally allowing for non-Bernoulli edge weight distributions. Following the W-RDPG model we introduced in the previous section, we have $\var{A_{ij}}=\bbx_i^\top[2]\bbx_j[2]-(\bbx_i^\top[1]\bbx_j[1])^2$. We rely on plugin variance estimates using the corresponding ASEs to compute the thresholds for the numerical test cases that follow [cf.\ vector $\bm\sigma$ in \eqref{eq:mean_Gamma_error} and \eqref{eq:var_Gamma_error}]. One can seamlessly blend the ideas in Sections \ref{subsec:directed_rdpg} and \ref{subsec:weighted_rdpg} to perform online CPD for weighted digraphs. The provided code offers this functionality.

In closing, note that the aforementioned discussion is pertinent only when the goal is to detect changes in the mean adjacency matrix (i.e., $l=1$). This is the scope of the ensuing numerical experiments. Considering larger values of $l$ could be prundent when interested in more fine-grained changes on the weights' distribution, as illustrated in Example \ref{Ex:SBM}. 

\section{Numerical Experiments}\label{sec:simus}

Here we carry out numerical experiments to evaluate the performance of the proposed online CPD algorithm for weighted and (un)directed graph sequences. We start with a controlled synthetic data setting, where the goal is to identify emergent network community structure (Section \ref{ssec:synthetics}). We carefully examine: (i) the choice of the detection threshold and monitoring function; (ii) the choice of the running statistic and its effect on the detection delay; (iii) robustness to the prescribed false alarm rate $\alpha$; and (iv) comparisons with relevant batch and online CPD methods. Test cases with real wireless and social network data are presented in Section \ref{ssec:real_data}. For the implementations we used the Python libraries \texttt{NumPy}~\cite{numpy}, \texttt{NetworkX}~\cite{networkx}, \texttt{pandas}~\cite{pandas}, \texttt{graspologic}~\cite{graspy}, as well as our own code which we share in \url{https://github.com/git-artes/cpd_rdpg}. For the comparison with the online CPD method in~\cite{chen2019sequentialCPD}, we used the official \texttt{R} implementation in the \texttt{gStream} package with the default parameters settings. 
Furthermore, as a baseline we have implemented the offline CPD algorithm described in~\cite{padilla2019change}. 
This implementation is also available in our GitHub repository.  

\subsection{Simulated data}\label{ssec:synthetics}

A timely problem is to detect when communities arise in networks. So, we first test the proposed online CPD method by generating a sequence of 150 ER graphs with $N=100$ nodes and connection probability $p=0.5$. After $t_c=150$, the model shifts to a two-block SBM with $N/2=50$ nodes in each community and connection probability $q_1=0.6$ for nodes in the same community and $q_2=0.4$ for nodes in different blocks. We use the first $m=50$ graphs as the training set, and the value of $d$ is automatically chosen (via scree plot) by the \texttt{graspologic} library used to obtain the ASE. Because the index $k$ in $\Gamma[m,k]$ measures how much time has elapsed since monitoring started, the change-point is at $k_c=100$.

\begin{figure}[t!]
	\centering
	\includegraphics[width=0.9\linewidth]{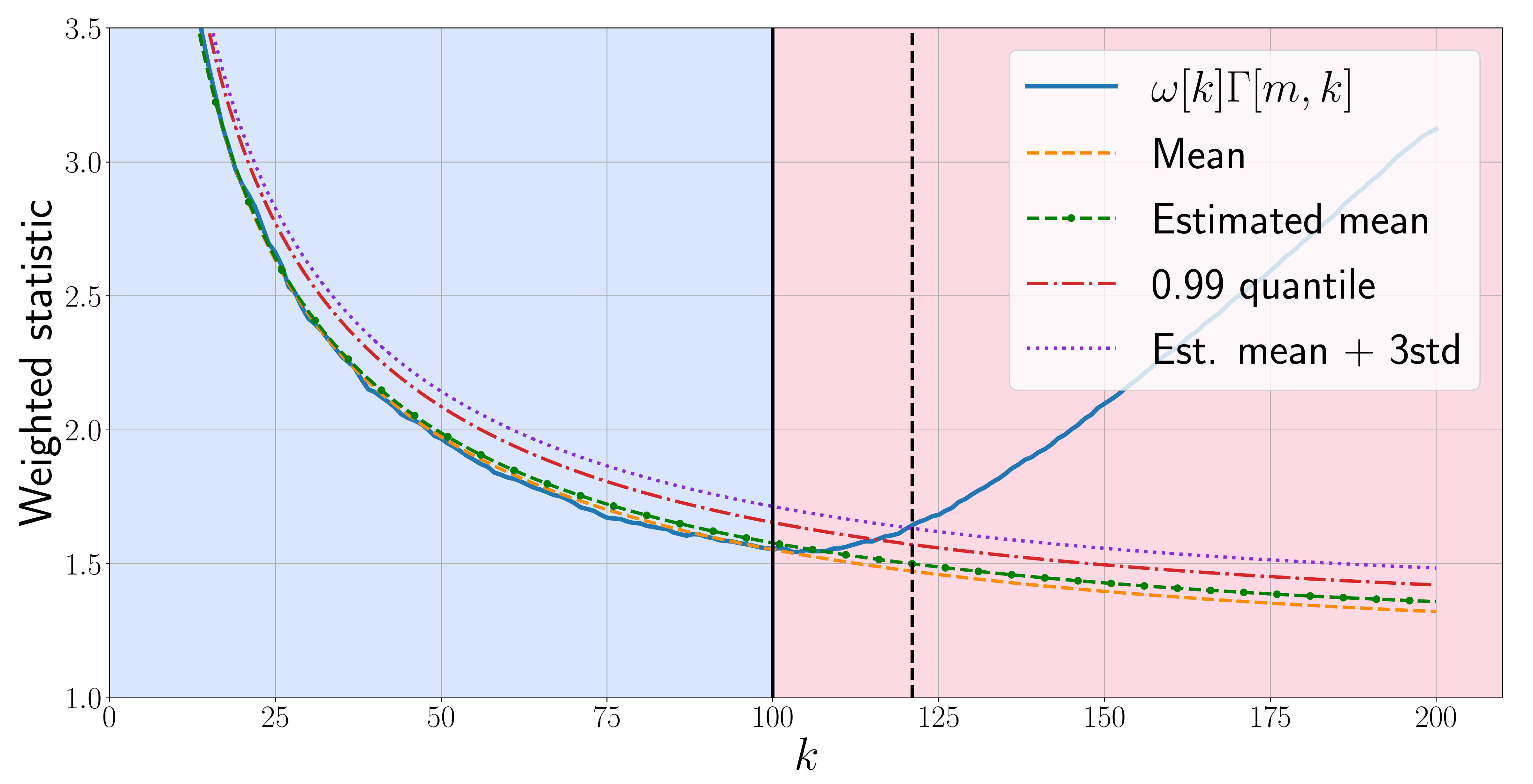}
	\caption{Evolution of $\omega[k]\Gamma[m,k]$, its mean and the estimated mean, for simulated data. Two thresholds are shown: the 0.99-quantile of the distribution in~\eqref{eq:ER_nonperfect} and three standard deviations away from the mean; those thresholds are very close and the latter is preferred due to its reduced complexity. The solid vertical line indicates the actual change-point, while the dashed one is the detection. A change in background color indicates a change-point detected by the offline algorithm~\cite{padilla2019change}. Our approach is able to detect the change with a relatively small delay, while operating in an online fashion.}
	\label{fig:ej_sintetico}
\end{figure}

Figure \ref{fig:ej_sintetico} depicts the results for this test case. We show two thresholds: the 0.99 quantile of the estimated distribution [i.e., the distribution given by~\eqref{eq:ER_nonperfect} but with $\hbe$ instead of $\bbe$] and $\text{th}[k]$, the estimated mean plus three standard deviations. Apparently, the difference between those two thresholds is small, so $\text{th}[k]$ is preferred due to its reduced complexity. Using that threshold a change-point is declared at $k^*=121$, so our algorithm is successfully identifying the change in the model. The detection delay can be explained if we look at the estimated mean of the weighted CUSUM statistic. Since we are estimating the error $\bbE$ as the 0.99-quantile over the training set, we always overestimate the true value. Also, since we are monitoring the cumulative sum \eqref{eq:cusum}, if a change occurs after a long period of time then the drift in  $\Gamma[m,k]$ will not be noticed immediately; see also the discussion in Section \ref{ssec:imp_details}. As a way to compare the performance of Algorithm \ref{A:online_CPD} with other approaches, Figure \ref{fig:ej_sintetico} also shows the detection result for the offline baseline proposed in~\cite{padilla2019change}. That algorithm detects the change with no delay, but it has a markedly greater computational complexity than ours and examines the entire data sequence as a batch.\vspace{2pt}

\begin{figure}[t!]
	\centering
	\includegraphics[width=0.9\linewidth]{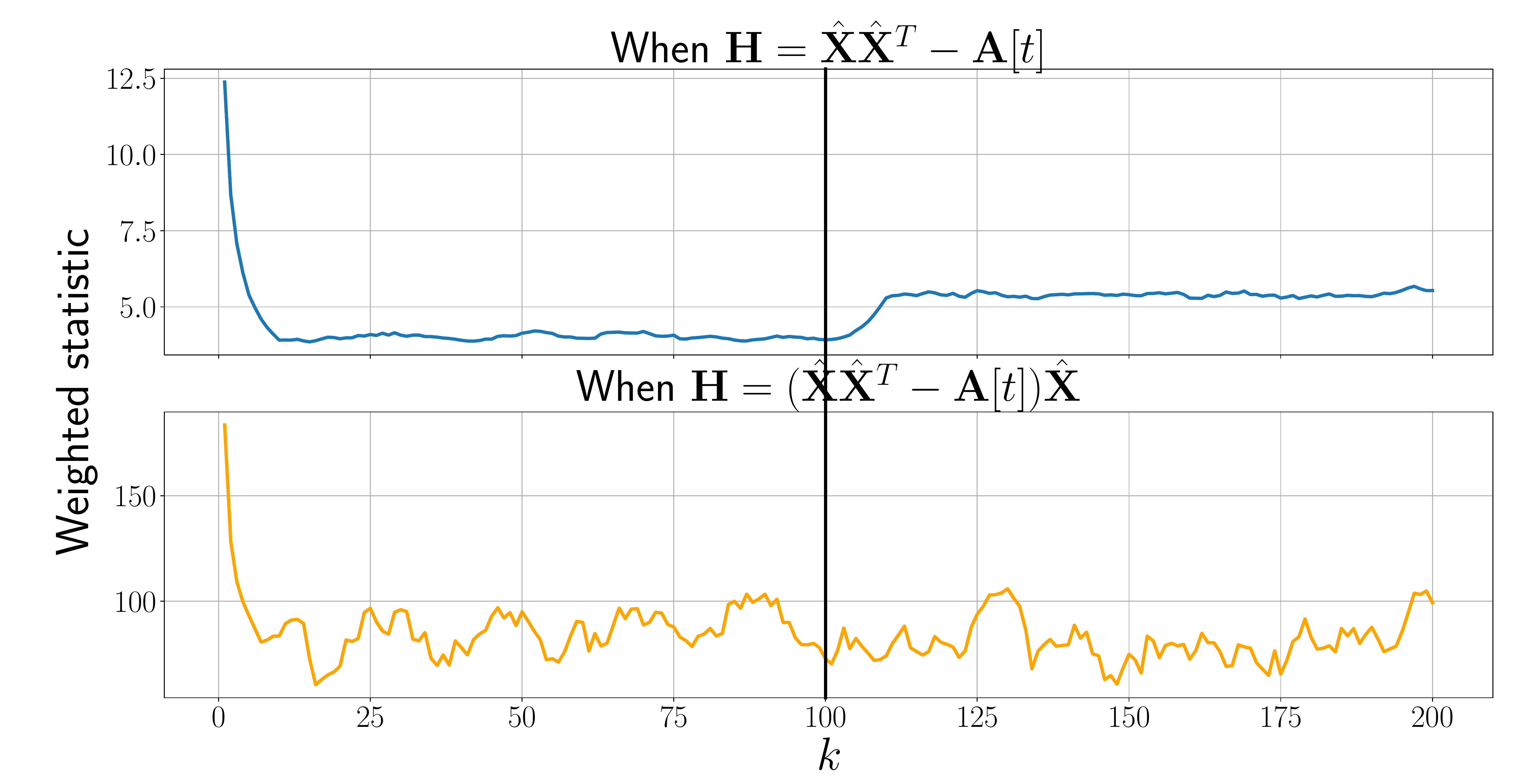}
	\caption{Evolution of $\omega[k]\Gamma[m,k]$ for residual (top) and projection (bottom) monitoring functions, using the MOSUM sliding window statistic. After the change-point there is a discernible change in trend for the residual; the projection does not exhibit such desirable behavior.}
	\label{fig:resid_proyect}
\end{figure}

\noindent \textbf{On the choice of the monitoring function.} In this running example, had we used the monitoring function $\bbH'=\left(\hbX\hbX^\top-\bbA[t]\right)\hbX$ (i.e., use the projection instead of the residual) we would have missed the change altogether. Indeed, for perfect ASE estimation, if our training data adheres to an ER model with parameter $p$ then $\hbX = \sqrt{p}\bbJ_{N\times d}$ and $\hbX\hbX^\top = p\bbJ_N$, with $\bbJ_{N\times d}$ denoting the ${N \times d}$ all-ones matrix. Now suppose there is a change in the nominal model and we shift to a two-block SBM, where each community has $N/2$ nodes and the connection probabilities are $q_1$ for nodes in the same block and $q_2$ for nodes in different communities. 
The connection probability matrix for said SBM is
\begin{gather}
	\bbP_{\text{SBM}} = \left(
	\begin{array}{c|c}
		\bbQ_1 & \bbQ_2\\
		\hline
		\bbQ_2 & \bbQ_1
	\end{array}
	\right),
	\label{eq:P_sbm}
\end{gather}
where $\bbQ_1=q_1(\bbJ_{N/2}-\bbI_{N/2})$ and $\bbQ_2=q_2\bbJ_{N/2}$, with $\bbI_{m}$ denoting the identity matrix of size $m$. After the change we thus have $\E{\bbH'} = (p\bbJ_N - \bbP_{\text{SBM}})\sqrt{p}\bbJ_{N\times d}$. Since each row of $\bbP_{\text{SBM}}$ has $N/2-1$ entries with value $q_1$ and $N/2$ entries with value $q_2$, each entry of $\E{\bbH'}$ is given by
\begin{align*}
	\left(\E{\bbH'}\right)_{ij} &= \left(\frac{N}{2}-1\right)\sqrt{p}(p-q_1) + \frac{N}{2}\sqrt{p}(p-q_2) \\
	&\approx N\sqrt{p}\left(p- \frac{q_1+q_2}{2}\right),
\end{align*}
for large $N$. Accordingly, choosing $p$, $q_1$ and $q_2$ such that $q_1+q_2=2p$ (as was the case for our simulation), we find that $\E{\bbH'} = \bb0$, i.e. we do not expect to see a drift in the monitoring function after the change. 

Figure \ref{fig:resid_proyect} shows the evolution of the weighted statistic $\omega[k]\Gamma[m,k]$ for both choices of the monitoring function. The setup is the same as in the previous test case, with a change-point located at $k_c=100$. The MOSUM statistic is adopted here, using a sliding window of length $L=10$. When the residual $\bbH$ is chosen as the monitoring function, a sudden shift in trend is observed after the change-point. However, when the projection $\bbH'$ is used the statistic does not exhibit such desirable behaviour and misses the model change.\vspace{2pt}

\noindent \textbf{On the sensitivity to $\alpha$.} Here we examine the robustness of Algorithm \ref{A:online_CPD} to the choice of the false alarm rate $\alpha$. We simulate the same scenario as before, except that $N=20$ in order to increase the variance of $\omega[k]\Gamma[m,k]$ and the  error $\bbE$. As thresholds we test $c_\alpha[k]$ for $1-\alpha\in\{0.99, 0.95, 0.9\}$, along with two versions of $\textrm{th}[k]$: the mean plus two and three times the standard deviations as in \eqref{eq:threshold_simple}. 

The results are depicted in Figure \ref{fig:ej_sintetico_varios_percentiles}. The example illustrates how using $1-\alpha=0.95$ or $1-\alpha=0.9$ may prove too conservative. In this particular instance, $1-\alpha=0.9$ would result in a (false) change-point detected at $k\approx 10$. Furthermore, both versions of $\textrm{th}[k]$ provide reasonable results, although the one that uses three standard deviations is consistently above $c_{0.01}[k]$ and is thus preferred. We will re-examine this choice in Section \ref{ssec:real_data}, when we present real-world examples.\vspace{2pt}

\begin{figure}
    \centering
    \includegraphics[width=0.9\linewidth]{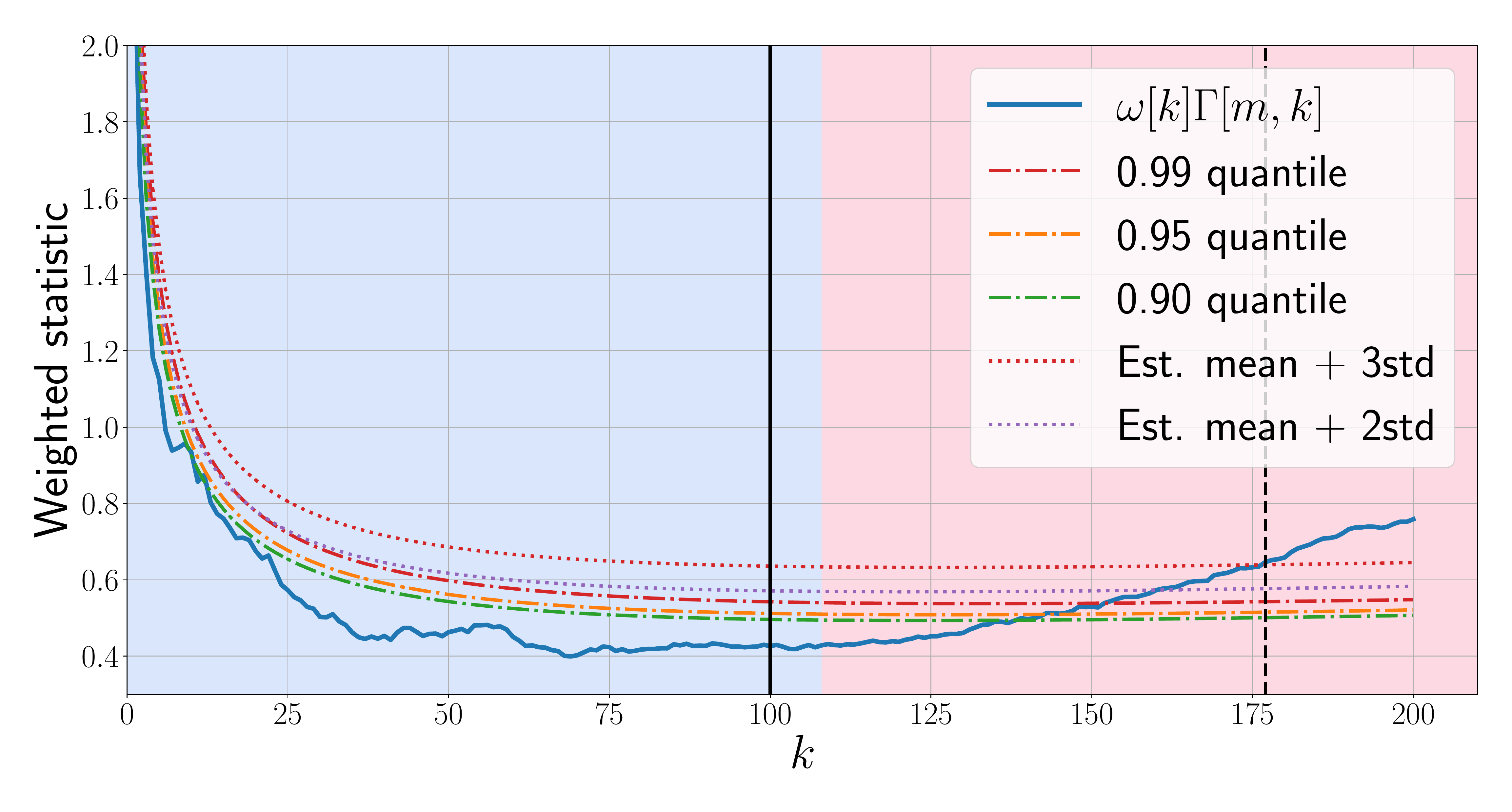}
    \caption{Evolution of $\omega[k]\Gamma[m,k]$ and five possible thresholds: $c_\alpha[k]$ (for $1-\alpha\in\{0.9, 0.95, 0.99\}$) and $\textrm{th}[k]$ equal to the mean plus two and three standard deviations. The setting is the same as in Fig.\ \ref{fig:ej_sintetico} except that $N=20$ to increase the variance of $\omega[k]\Gamma[m,k]$. Using $1-\alpha=0.99$ is preferred as it provides more robustness to false positives. Both choices of $\textrm{th}[k]$ are reasonable, although using three standard deviations is consistently above $c_{0.01}[k]$ (see the first time-steps).}
    \label{fig:ej_sintetico_varios_percentiles}
\end{figure}

\noindent \textbf{Comparison with~\cite{chen2019sequentialCPD}.} An online CPD algorithm based on a $k$-nearest neighbor approach was proposed in~\cite{chen2019sequentialCPD}. Observations are viewed as points in a normed space and the distance induced by such norm is used to define a neighborhood for each observation. Changes are detected by performing two-sample testing on the neighborhood graph. The proposed approach is computationally intensive because it requires that, if the current observation index is $n$, a two-sample hypothesis test is performed for each time $t \in \{1,\dots,n-1\}$ (or for a subset of these time instants). Also, it is memory-inefficient since one has to store the pairwise distances between all past observations. Even if these aspects are not a concern, this approach is ill-suited to detect changes in some sequences of networks, as we will argue shortly.

An example in~\cite{chen2019sequentialCPD} illustrates the performance of the CPD algorithm on network sequences. Observations are the adjacency matrices of the graphs and neighborhoods are defined using the distance induced by the Frobenius norm over such matrices. We will see that this distance does not allow for capturing some changes in the network connectivity, such as the formation of two communities discussed so far. Indeed, if $\bbA,\bbB \in \mathbb{R}^{N\times N}$ are adjacency matrices of two ER graphs with connection probability $p$, then
\begin{gather*}
	\E{\|\bbA - \bbB\|_F^2} 
	=N(N-1)2p(1-p),
\end{gather*}
since all entries $A_{ij},B_{ij}\sim\textrm{Bernoulli}(p)$. Thus $\|\bbA - \bbB\|_F^2 \approx 2p(1-p)N^2$ for sufficiently large $N$. Suppose now that $\bbC$ and $\bbD$ are two adjacency matrices from a two-block SBM, where each community has $N/2$ nodes and the connection probabilities are $q_1$ for nodes in the same cluster and $q_2$ for nodes in different communities. Then the connection probability matrix for $\bbC$ and $\bbD$ is given by~\eqref{eq:P_sbm}, so those matrices have $N^2/2$ entries whose expected value is $q_2$ and $(N/2-1)N\approx N^2$ entries whose expected value is $q_1$. All in all, similarly to the ER case we have
\begin{align*}
	\|\bbC - \bbD\|_F^2 &\approx N^2 \left( q_1 -q_1^2 +q_2-q_2^2\right),\\
	\|\bbA - \bbC\|_F^2 &\approx N^2 \left( p -p (q_1+q_2) + \frac{q_1+q_2}{2}\right).
\end{align*}
Again, if we choose $q_1$ and $q_2$ such that $q_1+q_2=2p$, then we obtain $\|\bbA - \bbB\|_F^2 \approx \|\bbA - \bbC\|_F^2$. In other words, the distance between an observation before the change ($\bbA$) and an observation after the change ($\bbC$) will be very similar to the distance between two observed matrices before the change ($\bbA$ and $\bbB$). For matrices after the change, we have that when $q_1+q_2=2p$ then
\begin{align*}
	\|\bbC - \bbD\|_F^2 &\approx 2N^2 \left( p-p^2 -(p-q_1)^2\right),
\end{align*}
so choosing $p$ and $q_1$ to be very similar (but not equal, so there is effectively a change), for large $N$ these two models will be indistinguishable under the Frobenius distance criterion.

\begin{figure}[t!]
	\centering
	\includegraphics[width=0.9\linewidth]{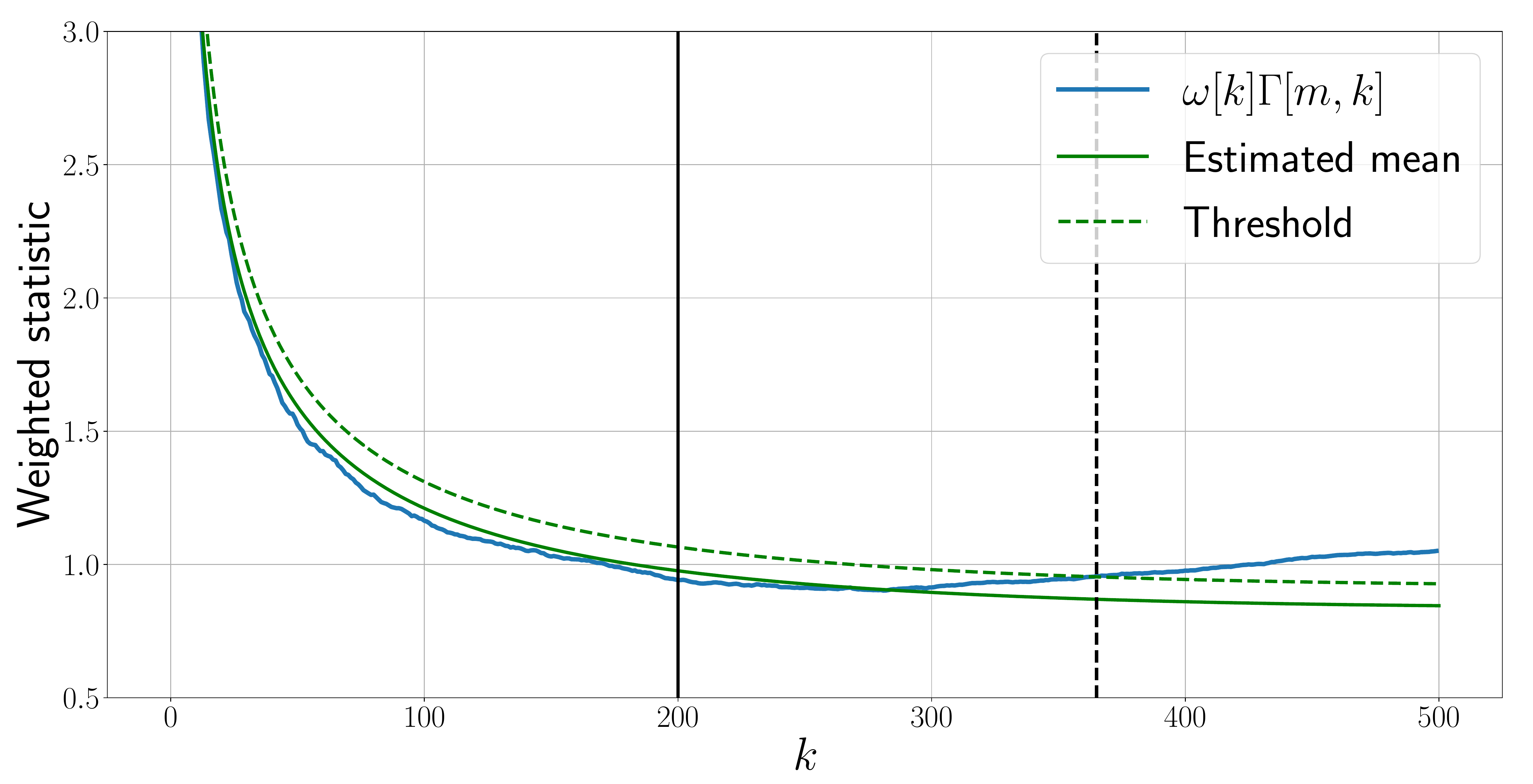}
	\caption{Detection result for a network transitioning from an ER model with $p=0.3$ to a two-block SBM with $q_1=0.275$ and $q_2=0.325$. Algorithm \ref{A:online_CPD} is able to detect the change in this setup, while the approach proposed in~\cite{chen2019sequentialCPD} fails to do so.}
	\label{fig:contraej_chen}
\end{figure}

We simulated such a setup, with a network of $N=100$ nodes switching from an ER model with $p=0.3$ to a two-block SBM with $q_1=0.275$ and $q_2=0.325$. The change-point was located at $k_c=200$. We ran the algorithm proposed in~\cite{chen2019sequentialCPD} using the implementation in the \texttt{R} package \texttt{gStream}. Selecting between 3 and 10 nearest neighbors and an average run length of 1000, it found no change-points in the data. Results for our CUSUM detector are depicted in Figure~\ref{fig:contraej_chen}. Apparently, there is a noticable change in trend in the weighted statistic after $k=300$, with a change-point being detected at $k^*=365$. This arguably large detection delay can be shortened using a finite memory statistic such as MOSUM.\vspace{2pt}
\begin{figure}[t!]
	\centering
	\includegraphics[width=0.9\linewidth]{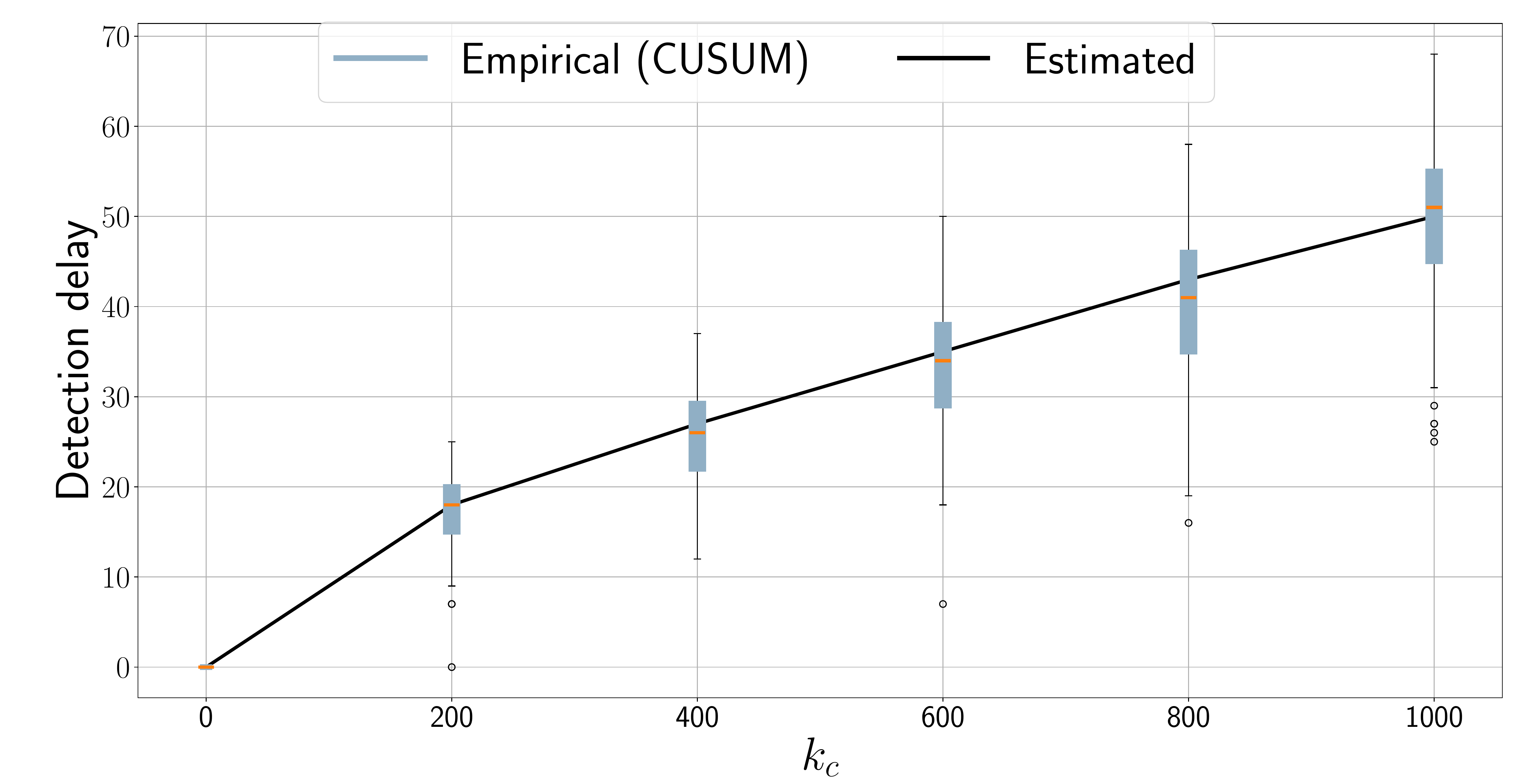}
	\caption{Estimated detection delay and empirical delays for different change-point locations $k_c$. Empirical delay is well predicted by the estimated curve. For the adopted CUSUM statistic, as expected the delay grows with $k_c$.}
	\label{fig:theoretical_delay}
\end{figure}

\noindent \textbf{Detection delay.} Characterizing the distribution of the detection delay $\tau$ (i.e., the time interval between the occurrence of a change and it actually being detected) is in general challenging. Instead, we will settle with a point estimate obtained via identification of the first instant the weighted statistic $\omega[k]\Gamma[m,k]$ crosses the threshold function $c_\alpha[k]$. Recall that this is the condition that defines the rejection region of our test. Since that statistic has finite variance, it is possible to predict at which time point $k^*$ the change will be detected by studying when the expectation of the weighted test statistic after the change [cf. \eqref{eq:mean_after_change}] first exceeds the threshold.
Once more, for simplicity and analytical tractability we will henceforth assume the threshold is set as $\textrm{th}[k]$ in \eqref{eq:threshold_simple}. This choice (approximately) corresponds to $\alpha=0.01$; see Figures \ref{fig:ej_sintetico} and \ref{fig:ej_sintetico_varios_percentiles} for further discussion on this point. 
To estimate the delay, we find the first instant $k^* \geq k_c$ for which $\omega [k^*]\mathbb{E}_{ac}[k^*] \geq \text{th}[k^*]$, where $\mathbb{E}_{ac}$ denotes the expectation of $\Gamma[m,k]$ after the change that is approximately given by~\eqref{eq:mean_after_change}. This amounts to solving the equation
\begin{gather}
    (k^*-k_c)^2 \left(  \|{\bm\sigma_Y}\|_1 - \|{\bm\sigma_X}\|_1+ 2k^*(\bbe^\top\bbdelta)+ (k^*-k_c) ||\bbdelta||_2^2 \right)^2 \nonumber\\
    = 9\left(2(k^*)^2\|{\bm\sigma_X}\|_2^2 + 4(k^*)^3(\bm\sigma_X^\top \bbe^2)\right),
    \label{eq:delay_inequality}
\end{gather}
which entails finding the roots of a fourth-order polynomial. The solution $k^*$ can be obtained numerically, and the estimated delay becomes $\tau = k^*-k_c$.

To test said method, we simulated a sequence of ER networks with $N=100$ nodes and connection probability $p=0.5$. We use the first $m=100$ graphs for training. The first $k_c$ graphs after training follow that same model, but then observations shift to an ER with $p=0.6$. The solution to~\eqref{eq:delay_inequality} allows us to estimate the detection delay for different values of $k_c$. This can be done after training, since once that phase ends the error $\bbe$ is fixed, and vectors $\bm\sigma_X$, $\bm\sigma_Y$ and $\bbdelta$ are defined by the change in the underlying model. Figure~\ref{fig:theoretical_delay} shows the estimated delay for $k_c \in \{0,200,400,600,800,1000\}$. For each $k_c$ a box plot of Algorithm \ref{A:online_CPD}'s empirical delays is also shown, computed for 100 simulated runs using the CUSUM statistic. Our estimation is consistent with the experimental delays in Algorithm \ref{A:online_CPD}, which tend to show a linear growth with $k_c$. 
\begin{figure}[t!]
	\centering
	\includegraphics[width=0.9\linewidth]{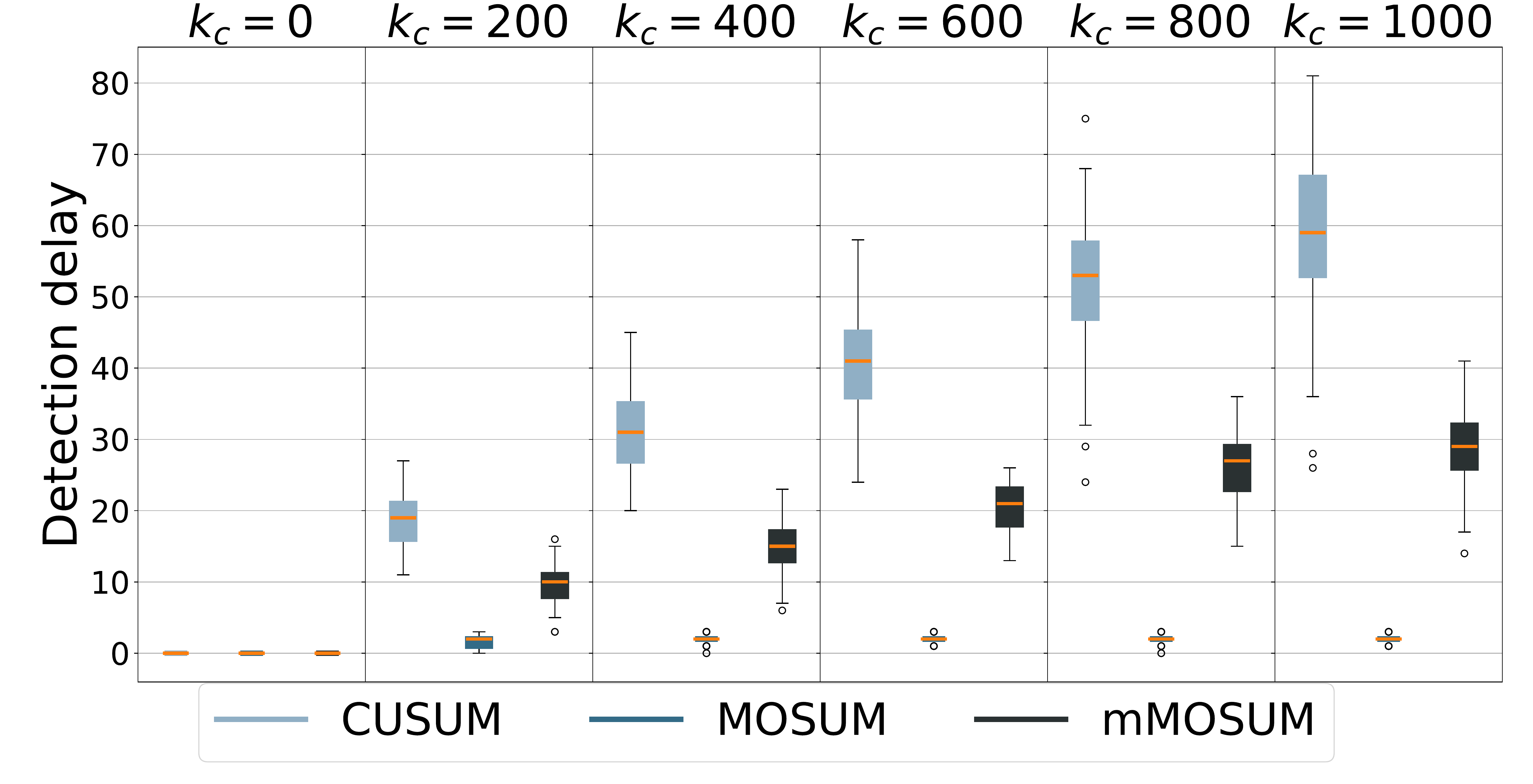}
	\caption{Empirical delays for different change-point locations $k_c$, using the CUSUM, MOSUM (with $L=10$), and mMOSUM (with $h=0.4$) statistics. Delays behave as expected given the different effective observation intervals: roughly constant delay for MOSUM, growing delays with $k_c$ for both CUSUM and mMOSUM, but at a slower rate for the latter.}
	\label{fig:delay_vs_k_c}
\end{figure}
\begin{figure}[t!]
	\centering
	\includegraphics[width=0.9\linewidth]{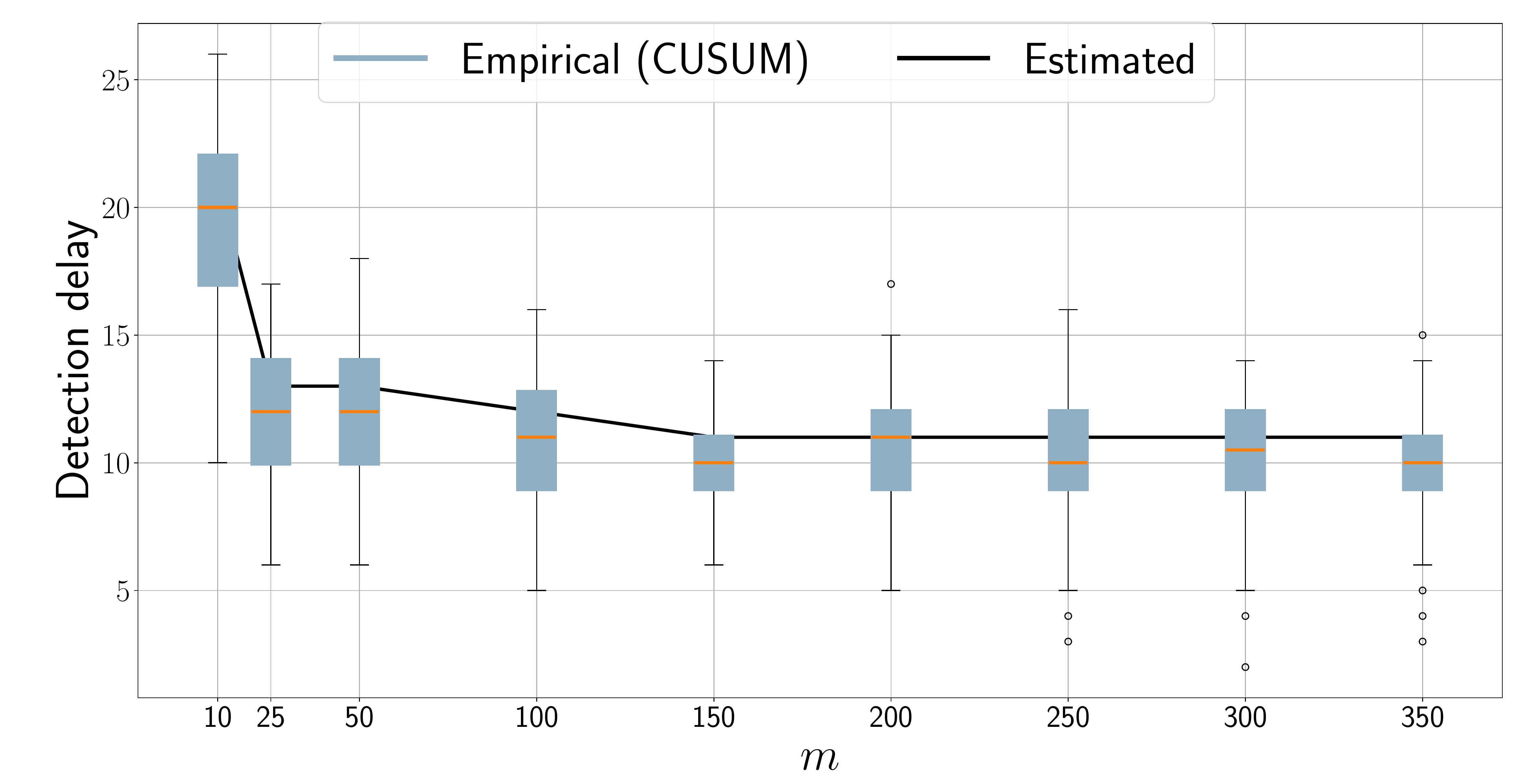}
	\caption{Estimated detection delay and empirical delays for different training set sizes $m$, for the CUSUM statistic. The delay is lower as $m$ increases, but there is no significant improvement after $m=25$.}
	\label{fig:delay_vs_m}
\end{figure}

Figure~\ref{fig:delay_vs_k_c} depicts the empirical delays in this setup for three different statistics: CUSUM, MOSUM and mMOSUM. This last running statistic is defined in~\cite{kirch2018sequentialCPD} as
\begin{equation}\label{eq:mMOSUM}
\bbs[m,k] = \sum_{t=m+\lfloor kh\rfloor+1}^{m+k}\bbh\left(\bbA[t],\hbX\right),
\end{equation}
where $h\in(0,1)$ and $\lfloor x\rfloor$ is the floor function, i.e., the largest integer that is smaller or equal to $x$. The mMOSUM is defined in a way such that early observations are discarded and the window length grows proportionally with $k$. Hence, the algorithm's response time should be faster than when using the CUSUM statistic. That is consistent with Figure~\ref{fig:delay_vs_k_c}, which shows that the detection delay for the mMOSUM statistic grows with $k_c$, but at a slower rate than that of CUSUM. For this simulation we set $h=0.4$. The MOSUM statistic, with a window length of $L=10$ observations, attains the shortest delay among the three and it is roughly constant with $k_c$. This is expected given that the window size remains constant for MOSUM, so there is no inertia associated with the change-point occurring long after monitoring started.

Finally, Figure~\ref{fig:delay_vs_m} shows the empirical and estimated delays for the CUSUM statistic for various training set sizes $m$. The setup is similar to that of the previous test case, with an ER model switching from $p=0.5$ to $p=0.6$ at $k_c=100$. As expected, the delay decreases with $m$, since more training samples lead to more accurate ASE estimates. Also, it is important to note that Algorithm \ref{A:online_CPD} performs well with a relatively small training set size. In this setting, we observe there is no significant improvement beyond $m=25$ (with the expected delay going from $\tau = 13$ to $\tau = 11$ for $m=300$).

\subsection{Real data experiments}\label{ssec:real_data}

\begin{figure}[t!]
    \centering
    \includegraphics[width=0.9\linewidth]{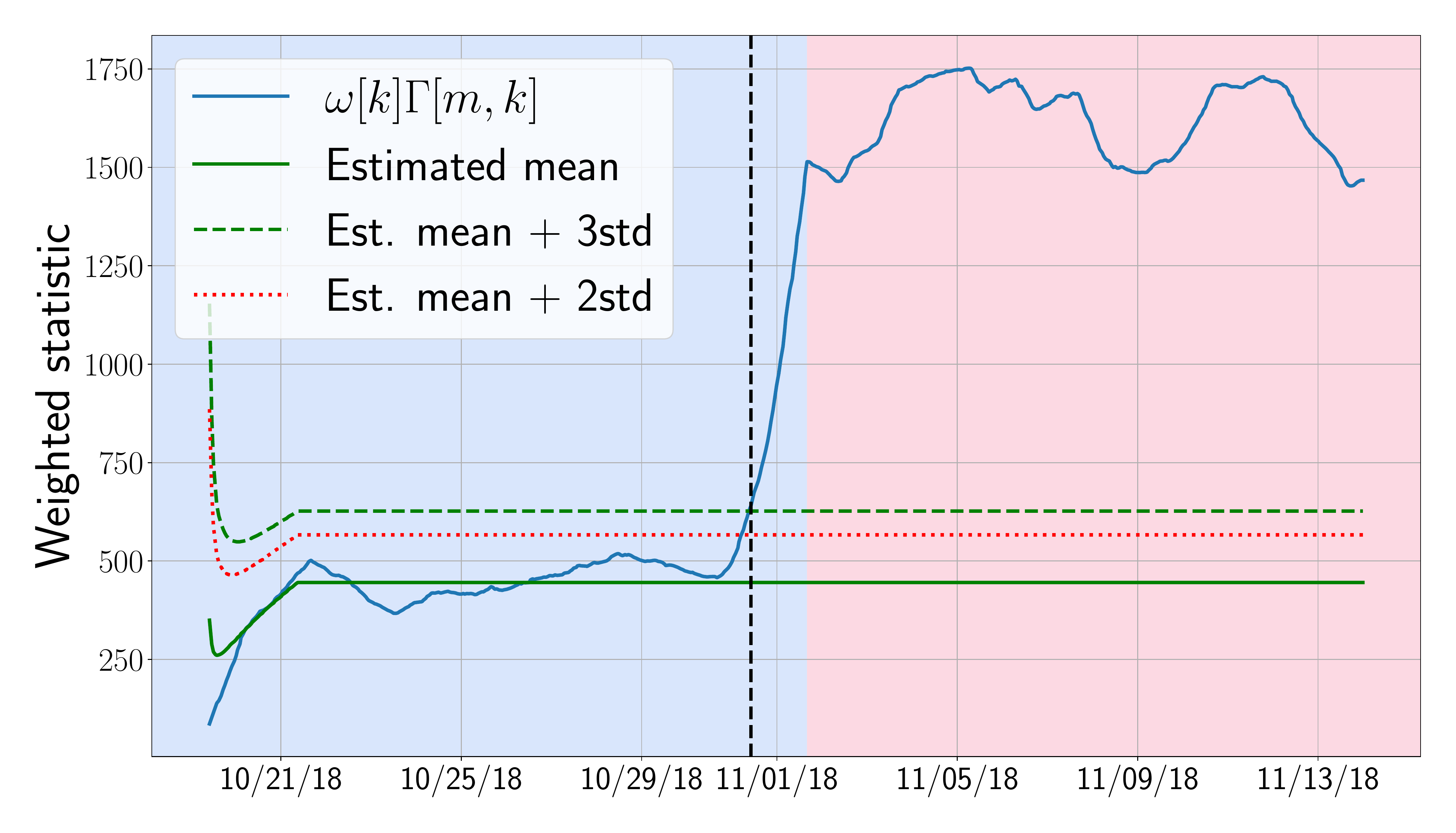}
    \includegraphics[width=0.9\linewidth]{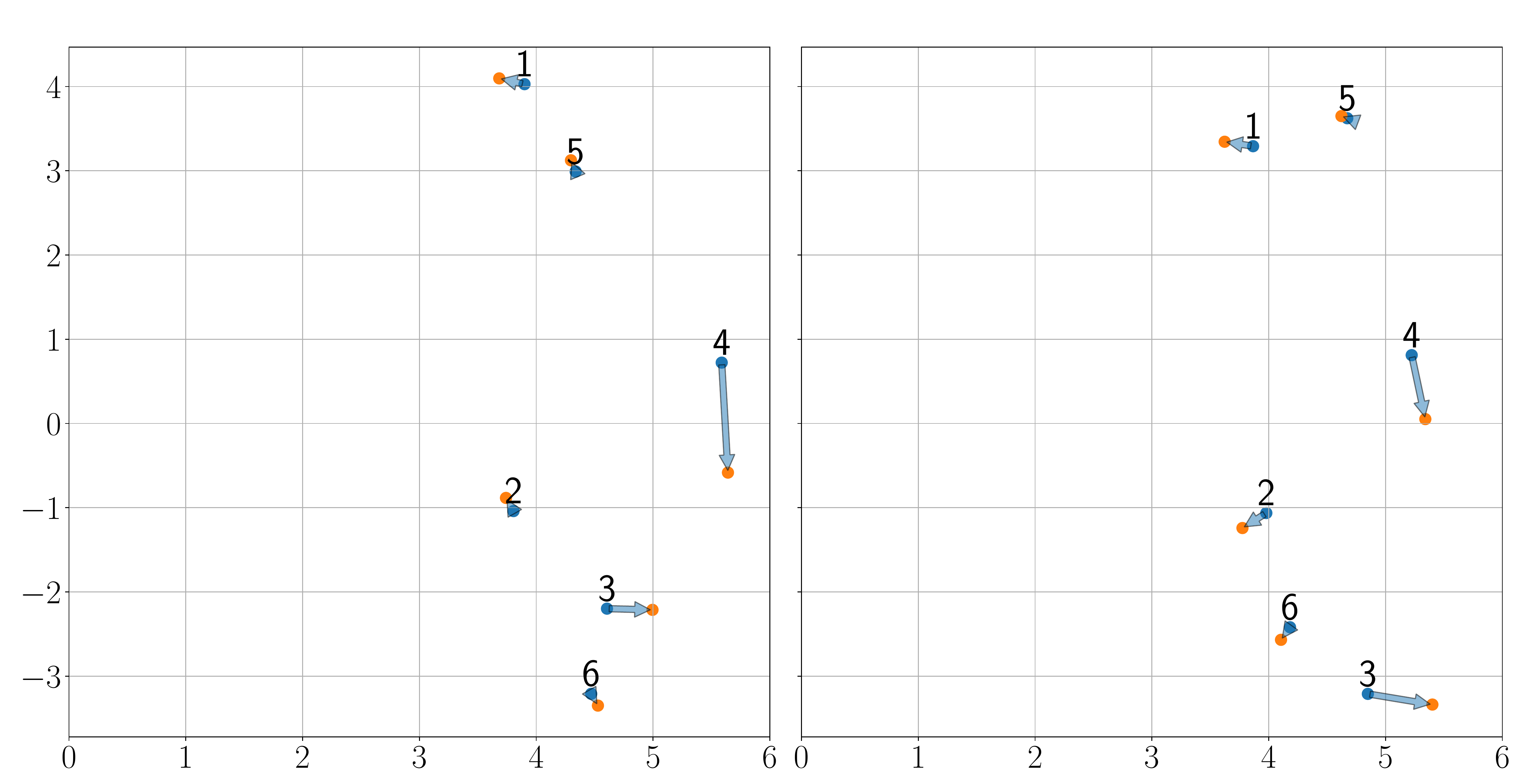}
    \caption{Online CPD for the RSSI dataset. Top: MOSUM statistic. A change in background color indicates a change-point detected by the offline algorithm~\cite{padilla2019change}. The dashed vertical line shows the detected change-point for the online algorithm. Algorithm \ref{A:online_CPD} successfully detects that an AP was moved. Bottom, left: $\hbX_1^l$ (blue) and $\hbX_2^l$ (orange) latent vectors for $d=2$ corresponding to $\bar{\bbA}_1$ and $\bar{\bbA}_2$ respectively. Vectors corresponding to the same node are joined by an arrow. Bottom, right: Id. but with $\hbX_1^r$ (blue) and $\hbX_2^r$ (orange).  Node 4 corresponds to the AP that was moved, which together with node 3 are the ones whose embeddings change more prominently.}
    \label{fig:rssi}
\end{figure}

\noindent \textbf{Wireless network data.} Received Signal Strength Indicator (RSSI) measurements between Wi-Fi access points (APs) in a Uruguayan school are obtained from the dataset described in~\cite{capdehourat2020nation}. In this particular example we considered a network consisting of $N=6$ APs, with  measurements collected hourly during almost four weeks, spanning from 10/17/2018 to 11/13/2018 (corresponding to $T=655$ graphs). The AP corresponding to node 4 was moved on 10/30/2018. 
As RSSI is measured in dBm (and are negative), we have first added an offset of $91$ to all weights so that they become positive (as $-90$\,dBm is the smallest RSSI measurement in this case) and that larger values still mean ``stronger'' edges. We thus have a directed (as power measurements between APs are not necessarily symmetric) and weighted graph sequence. 

We used two days worth of measurements for training ($m=48$) beginning on 10/12/2018. The resulting MOSUM statistic, the estimated mean and the resulting threshold $\text{th}[k]$ are shown in Figure~\ref{fig:rssi} (top). Note how Algorithm \ref{A:online_CPD} rapidly detects the AP movement. The offline CPD baseline in~\cite{padilla2019change} is also able to detect the change, at around the same date. Furthermore, we complement the threshold studies carried out in Section \ref{ssec:synthetics} and compare the same two versions of $\textrm{th}[k]$, namely the estimated mean plus two or three standard deviations. Note how the change-point is detected around the same instant regardless of the specific choice.

In addition to CPD, a valuable feature of RDPGs and its variants is their interpretability. To illustrate this attribute, let us consider two averaged adjacency matrices: those corresponding to the historic dataset and the last two days of the observation period. Let us denote the resulting matrices as $\bar{\bbA}_1$ and $\bar{\bbA}_2$, respectively, and analyze the resulting latent positions. In order to avoid the rotation ambiguities, we have used the so-called omnibus embedding~\cite{levin2017omnibus}, which in this case amounts to performing ASE to $\bbM = \bigl( \begin{smallmatrix}    \bar{\bbA}_1& (\bar{\bbA}_1+\bar{\bbA}_2)/2\\
    (\bar{\bbA}_1+\bar{\bbA}_2)/2 & \bar{\bbA}_2\end{smallmatrix}\bigr)$. 
This approach is only practical when jointly embedding a few adjacency matrices (two here), as the size of $\bbM$ increases rapidly with the number of matrices considered. 

Nodal vectors ($d=2$) are depicted in Figure~\ref{fig:rssi} (bottom), where an arrow shows the changes between the embeddings of $\bar{\bbA}_1$ and $\bar{\bbA}_2$. Notice how the largest changes correspond to nodes 3 and 4. The scaling ambiguity we discussed in Section \ref{subsec:weighted_rdpg} obscures which of the two APs was actually moved. Still, this monitoring tool would be valuable to network administrations as it identifies changes in a timely fashion and it provides a curated list of potentially problematic APs.\vspace{2pt} 


\begin{figure}[t]
    \centering
    \includegraphics[width=0.9\linewidth]{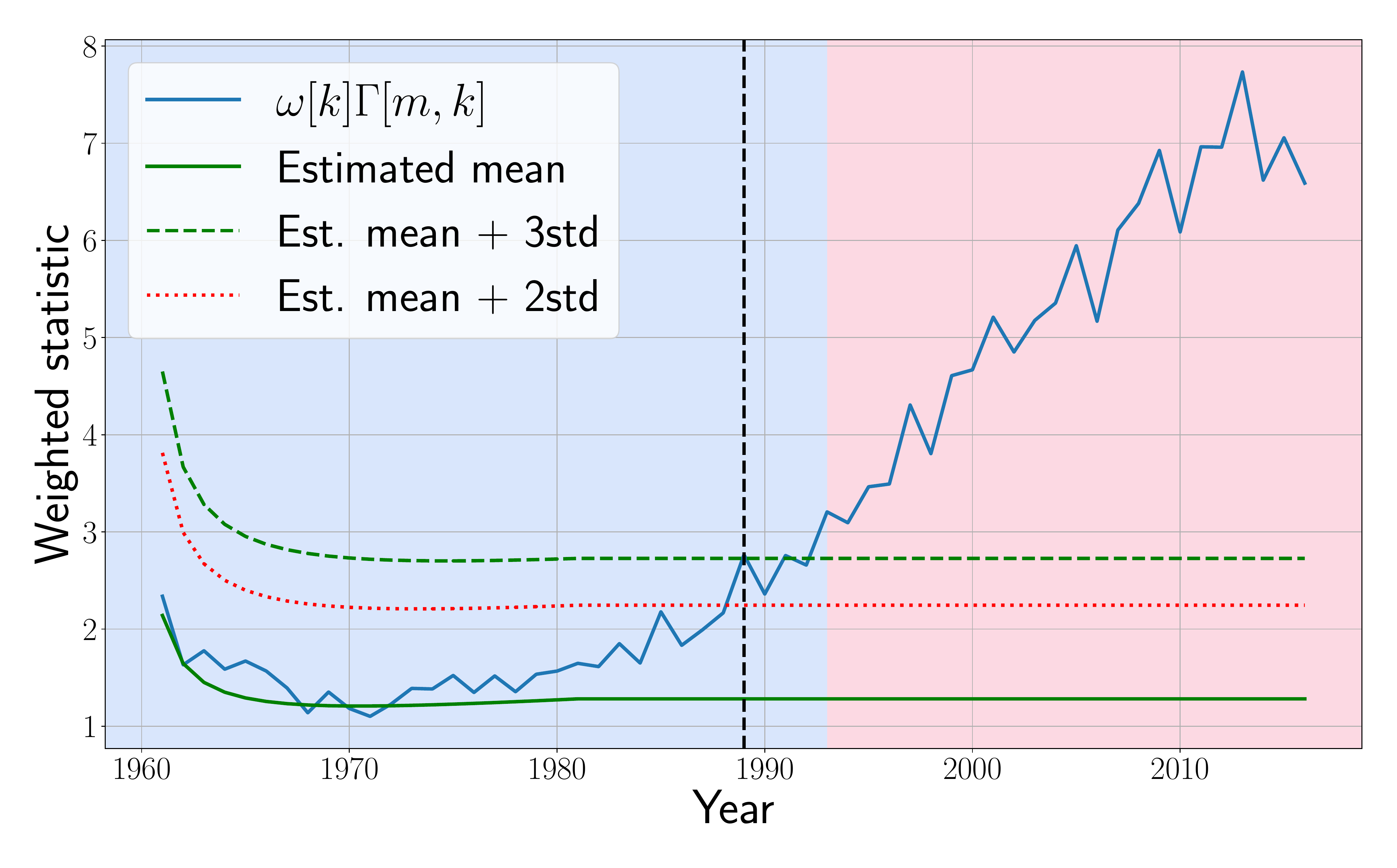}
    \includegraphics[width=0.9\linewidth]{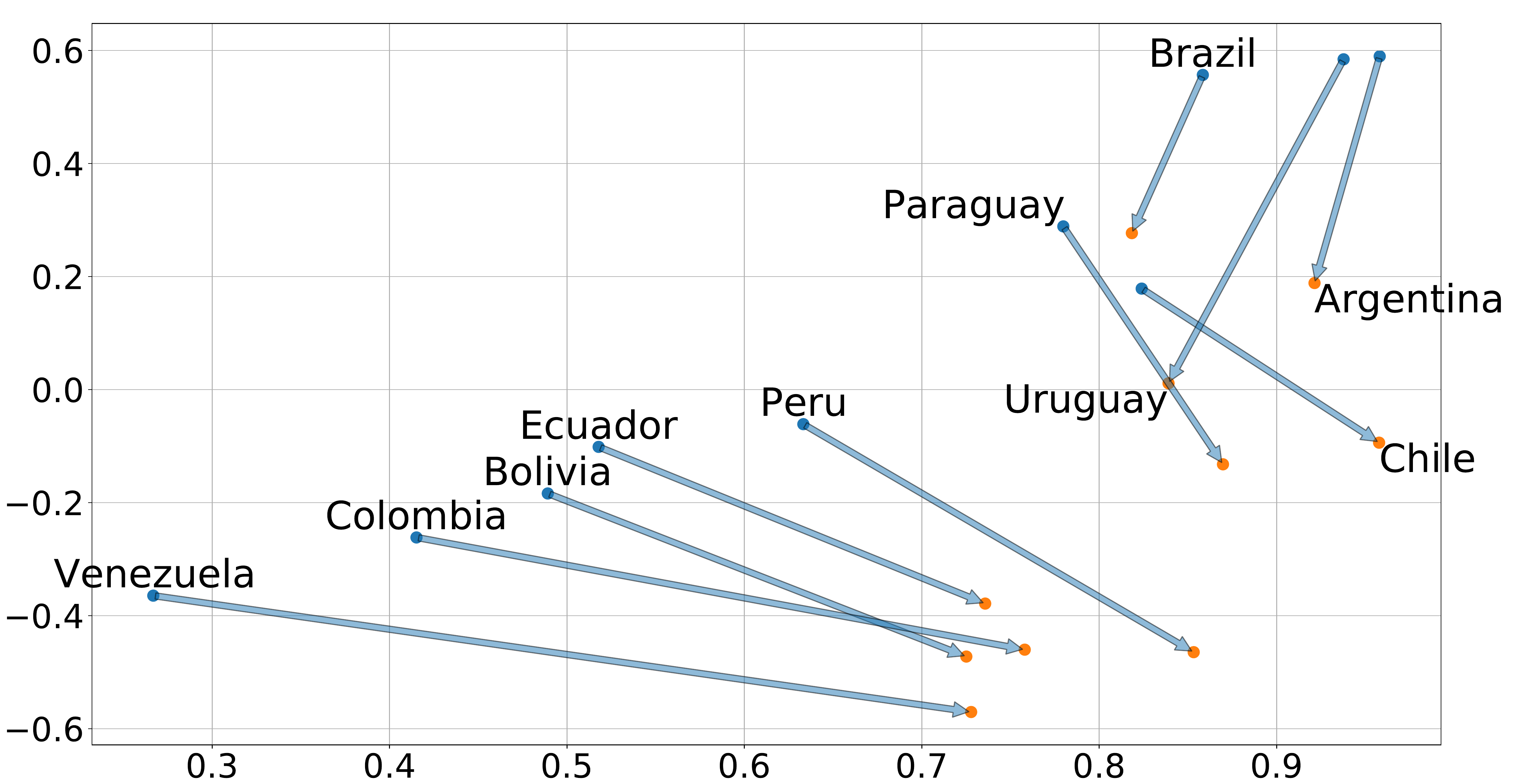}
    \caption{Online CPD for the South American football matches. Top: evolution of MOSUM statistic. The dashed vertical line shows the detected change-point, that can be traced to a change in the \emph{Copa Am\'erica} organization format. A change in background color indicates a change-point detected by the offline algorithm~\cite{padilla2019change}. Bottom: embeddings corresponding to the averaged historic set (blue) and the last 10 graphs of the observation period (orange). There are two distinct communities (northern and southern countries), and an increase of the number of matches played by the northern countries (with relatively less football tradition at the time) is clear by the changes in its embeddings.}
    \label{fig:conmebol}
\end{figure}

\noindent \textbf{South American football matches.} Consider a dynamic football network, whose $N=10$ nodes are the national teams affiliated to CONMEBOL (which associates all South American countries except Guyana and Suriname). This is the oldest continental confederation under FIFA, and its teams have a long history going back to 1901. We consider yearly matches since 1940, when all national associations were founded and most have joined CONMEBOL (Venezuela joined in 1952). 

The resulting undirected graphs have edge weights indicating the number of matches played between the two incident national teams during a particular year (data obtained from \url{https://www.eloratings.net/}). We used the first $m=20$ years for training and the evolution of the resulting weighted CUSUM statistic is shown in Figure \ref{fig:conmebol} (top). 

A change-point is detected around 1990 both by the online and offline CPD algorithms. Indeed, CONMEBOL's flagship tournament (\emph{Copa Am\'erica}) went through a period of intermittency that would last until 1987, when it started being organized regularly every two years with a nation hosting the event. This is apparent from the resulting embeddings in Figure \ref{fig:conmebol} (bottom), where northern countries increase their corresponding magnitudes (indicating more frequent matches) and form a relatively tight community. On the other hand, southern countries form another (more loose) community, which approached the northern's one in recent years. Furthermore, this community's structure changed, where e.g., the historic Argentina-Uruguay match is now not as significant. We also examine the robustness of the results with respect to the choice of the threshold. Notice that both versions of $\textrm{th}[h]$ we implemented again detect a change-point roughly around the same time (one year difference in Figure \ref{fig:conmebol}). But as mentioned in Section \ref{ssec:synthetics}, using the mean plus three standard deviations clearly provides more robustness to false positives, particularly in high noise settings as in this test case.

\begin{figure}[t]
    \centering
    \includegraphics[width=0.9\linewidth]{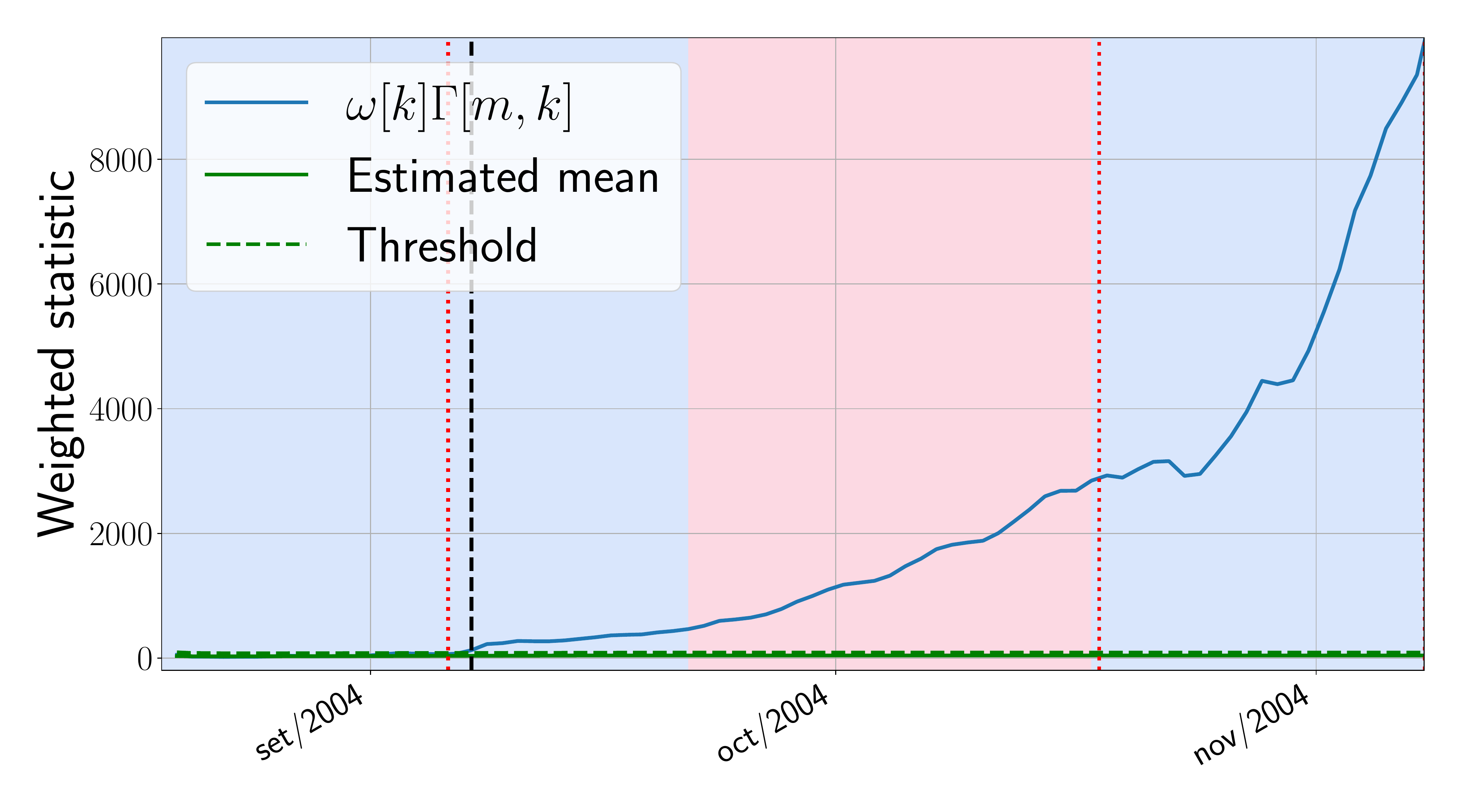}
    \caption{Online CPD for the MIT proximity dataset (using the MOSUM window). A change in background color indicates a change-point detected by the offline algorithm of~\cite{padilla2019change}. The dashed vertical line shows the detected change-point for the online algorithm. Dotted vertical lines indicate the beginning of the semester and the ``sponsor week''. The offline algorithm misses the first change-point. }
    \label{fig:mit}
\end{figure}
\vspace{2pt} 

\noindent \textbf{MIT proximity network.} Lastly, let us consider the stream of social graphs introduced in~\cite{eagle2006reality}. The dataset  includes the cell tower to which the mobile phone of a group of MIT faculty and graduate students connected between July 2004 and June 2005. We have processed the dataset and constructed a daily graph where nodes are people and the weight of each edge is how many minutes two people share the same tower on that given day\footnote{We used Jeremy Kun's scripts in \url{https://github.com/j2kun/reality-mining}.}. A collection of labeled events are described in \cite[Fig.~8]{peel2014evolvingCPD}, such as the beginning of the semester in early September and the ``sponsor week'' during mid-October.

We have considered a full month worth of undirected graphs starting on mid-July as training set and all the $N=84$ people that were registered during the study. The evolution of the MOSUM statistic until early November is shown in Figure~\ref{fig:mit}. Dotted vertical red lines indicate the two events we mentioned before, which fall within the observation period. First of all, it is important to note that the online CPD algorithm detects a change during early September, very near to the beginning of the semester. This change-point is missed by the offline algorithm in~\cite{padilla2019change} (see the changes on the background color), which indicates a change-point almost two weeks later. Furthermore, the example illustrates an interesting advantage of a finite-memory statistic such as MOSUM: the second change-point (this time correctly flagged by the offline algorithm) is also clearly discernible. Notice how the statistic is starting to stabilize around mid-October and then presents a large change of slope. Indeed, changes on the statistic after plateauing are indicative of further change-points.

\section{Conclusions and Future Work}\label{sec:conclusions}

We developed a computationally-efficient online CPD algorithm for monitoring applications involving streaming network data. The goal is to declare in (pseudo) real time when a sequence of observed graphs changes its underlying distribution. Leveraging the RDPG modeling framework and assuming historical ``clean'' data are available, the novel algorithm computes (offline) the ASE of the historical graphs (i.e., a training set) and then efficiently updates the cumulative sum of a monitoring function as data arrive sequentially-in-time. 
Statistical analysis of the monitored random sequence facilitates deriving meaningful detection thresholds to control type-I error rates, as well as to study the algorithm's detectability limits and to numerically predict delay behavior. Generalizations of the RDPG model to directed and weighted graphs markedly broaden the applicability of the novel online CPD framework, as illustrated through various real-data case studies.

This work opens up several exciting and challenging avenues for future work.
For instance, while still relying on RDPG modeling it would be of interest to explore sequential CPD formulations that minimize (or provide an explicit handle on) detection delay. Even in the present setting, carrying out a rigorous delay analysis would constitute a valuable contribution. {In all fairness, accomplishing this goal would be central towards fully solving the online change-point detection problem}. With regards to ASE-induced model estimation error, although in this work we presented a simple yet effective ``leave-one-out'' approach to approximate its value, a worthwhile future direction in our agenda is the study of theoretical bounds and guarantees for this plug-in statistic.
Our methodology detects changes in the model with respect to a training set of nominal graphs, and assumes that the number of nodes in the network does not change. Depending on the particular application, it may be interesting to consider the case where certain nodes are not always present on the network, and we are interested in only a subset of them.  Along these lines, we believe it would be worthwhile to develop embedding and CPD algorithms for partially observed graph streams, say due to sampling. Lastly, one could also envision online CPD schemes using just graph signal observations, because ASE-type embeddings are likely still computable from empirical signal covariance matrices under diffusion model assumptions.

\appendix

\subsection{Proof of the bound in Example \ref{Ex:ER_detectability}}\label{app:bound}

A sequence of ER graphs with connection probability $p$ changes to $q=p-\Delta$ at a certain time-step. Equation $\|\bbe+\bbdelta\|_2^2>\|\bbe\|_2^2$ in this case may be written as
\begin{equation}\label{eq:detect_cond_er}
    2\sum_{i=1}^N\sum_{j=i+1}^N E_{ij}>-\Delta\frac{N(N-1)}{2}, 
\end{equation}
where we have assumed that $\Delta>0$ (the analysis that follows is readily extended to $\Delta<0$). 
Recalling that in this case $\bbE=\hbx\hbx^\top-p\bbone_{N\times N}$ (with $\hbx\in\reals^{N\times 1}$), we rewrite \eqref{eq:detect_cond_er} as
\begin{gather}\label{eq:evento_deteccion}
    \hbx^\top(\bbone_{N\times N}-\bbI)\hbx>\left(p-\frac{\Delta}{2}\right)N(N-1).
\end{gather}
Since asymptotically (in $N$) $\hbx$ is a normal vector with mean $\bbmu=\sqrt{p}\bbone_{N\times 1}$ and covariance matrix $\bbSigma=\frac{(1-p)}{Nm}\bbI$~\cite{athreya2016limit, tang2018connectome,bourgade2017eigenvector}, we consider this asymptotic regime and use results about the statistics of quadratic forms of Gaussian vectors~\cite[Ch. 5]{rencher2008linear}. For instance, the resulting mean is 
\begin{align*}
    \mathbb{E}[\hbx^\top(\bbone_{N\times N}-\bbI)\hbx] ={}& \text{tr}\left[(\bbone_{N\times N}-\bbI)\bbSigma\right] + \bbmu^\top(\bbone_{N\times N}-\bbI)\bbmu\\
    ={}& pN(N-1).
\end{align*}

Comparing the equation above to \eqref{eq:evento_deteccion}, it follows we have to bound the probability that $\hbx^\top (\bbone_{N\times N}-\bbI)\bbx$ exceeds its mean minus $\Delta N(N-1)/2$. To this end we compute the variance of the quadratic form, which is (let $\sigma^2:=(1-p)/(Nm)$)
\begin{align*}
    \text{var}[\hbx^\top(\bbone_{N\times N}-\bbI)\hbx] ={}& 2\text{tr}\left[((\bbone_{N\times N}-\bbI)\bbSigma)^2 \right] \\
    & +4\bbmu^\top(\bbone_{N\times N}-\bbI)\bbSigma(\bbone_{N\times N}-\bbI)\bbmu\\
 ={}& 2\sigma^2N(N-1)(\sigma^2 + 2(N-1)p).
\end{align*}
Applying Chebyshev's inequality, the result follows.\hfill $\blacksquare$

\subsection{Proof sketch for Theorem \ref{theorem:wrdpg}} \label{ProofTheoremWRDPG}

We now give an overview of the necessary steps to prove Theorem \ref{theorem:wrdpg}. We adapt the arguments used in \cite{sussman2014consistent} to accommodate our setting; therefore, we will outline how their proof can be adapted to our case.

The following notation will be used throughout this section. We will denote the eigendecomposition of matrix $\bbB \in \reals^{N \times N}$ as $\bbV_B\bbLambda_B\bbV_B^\top$, with the elements in the diagonal of $\bbLambda_B$ in decreasing order.  $\hbLambda_B\in\reals^{d\times d}$ will denote the diagonal matrix with the $d$ largest eigenvalues of $\bbB$, and $\hbV_B\in\reals^{N\times d}$ will be the corresponding $d$ dominant eigenvectors. Recall we assume that $0 \leq B_{ij} < M$, for some $M>0$, and that $\{B_{ij}\}_{i<j}$ are independent with $\E{B_{ij}} = P_{ij}$, where $\bbP=\bbX \bbX^\top$ for some fixed $\bbX \in \reals^{N\times d}$. The eigendecomposition of $\bbP$ is $\bbV_P\bbLambda_P\bbV_P^\top$. Matrices $\hbLambda_P$ and $\hbV_P$ are similarly defined for $\bbP$. 

The first step of the proof is to show that, for large $N$, it almost surely holds that:
\begin{gather*}
    \|\bbB^2-\bbP^2\|_F < \sqrt{3M^4N^3\log N},
\end{gather*}
where $\bbB^2 = \bbB \times \bbB$ denotes the usual matrix product. The argument (in a more general setting) can be found in~\cite[Lemma 2]{fishkind2013consistent}. In our setting, the basic idea is to write
\begin{gather*}
    \bbB^2_{ij} - \bbP^2_{ij} = \sum_{k\neq ij} \left( \bbB_{ik}\bbB_{kj} - \bbP_{ik}\bbP_{kj}\right) - \bbP_{ii}\bbP_{ij} - \bbP_{ij}\bbP_{jj}.
\end{gather*}
Since $\bbB_{ik}\bbB_{kj}$ are independent for $k\neq i,j$ and $\bbP_{ij}$'s are bounded by $M$, we use Hoeffding's inequality to show that
\begin{gather*}
    \Pc{\left( \bbB^2_{ij} - \bbP^2_{ij} \right)^2 \geq 2M^4(N-2)\log N + M^4(4N-4)} \leq \frac{2}{N^4}.
\end{gather*}

Then, using the subadditivity property of  probability and the Borel-Cantelli Lemma, we can show that almost always
\begin{gather*}
    \sum_{i,j : i \neq j} \left( \bbB^2_{ij} - \bbP^2_{ij} \right)^2 \leq \frac{5}{2}M^4N^3 \log N.
\end{gather*}
Since $\left( \bbB^2_{ii} - \bbP^2_{ii} \right)^2 \leq M^4$, we finally conclude that
\begin{gather*}
    \|\bbB^2-\bbP^2\|^2_F \leq \frac{5}{2}M^4N^3 \log N + N M^4 < 3M^4N^3\log N
\end{gather*}
for sufficiently large $N$.

Once this is established, we apply a variant of the Davis-Kahan theorem to $\bbB^2$ and $\bbP^2$. Since the eigenvectors of $\bbB$ and $\bbB^2$ coincide (the same is true for $\bbP$ and $\bbP^2$) and we assume the eigengap for $\bbP$ is greater than $\delta N$ (and thus the eigengap for $\bbP^2$ is greater than $\delta^2 N^2$), \cite[Corollary 3]{yu2015useful} ensures that it is possible to choose the columns of $\bbV_{B}$ such that
\begin{gather*}
    \|(\bbV_{B})_{\cdot i} - (\bbV_{P})_{\cdot i}\|_2 \leq \frac{2^{3/2}}{\delta^2} \sqrt{3M^4\frac{\log N}{N}},
\end{gather*}
for every $i \leq d$, where $(\bbV_{B})_{\cdot i}$ denotes the $i$-th column of matrix $\bbV_{B}$. Since the first $d$ columns of $\bbV_{B}$ are the columns of $\hbV_{B}$ (the same is true for $\bbV_{P}$ and $\hbV_{P}$) this in turn implies that, for such a choice, 
\begin{gather*}
    \|\hbV_{B} - \hbV_{P}\|_F \leq C \sqrt{d} \sqrt{\frac{\log N}{N}},
\end{gather*}
where $C$ is a constant.

The rest of the proof follows, \textit{mutatis mutandis}, that of \cite{sussman2014consistent}. First, by writing
\begin{align*}
      \|\hbV_{B}\hbLambda_B^{1/2} - \hbV_{P}\hbLambda_P^{1/2}\|_F \leq{} &{}\|\hbV_{B}(\hbLambda_B^{1/2} - \hbLambda_P^{1/2})\|_F \\
      {}&{}+\|(\hbV_{B} - \hbV_{P})\hbLambda_P^{1/2}\|_F
\end{align*}
using the previous bounds we can show that
\begin{gather}\label{eq:bound_ase}
    \|\hbV_{B}\hbLambda_B^{1/2} - \hbV_{P}\hbLambda_P^{1/2}\|_F \leq C d \sqrt{\log N}.
\end{gather}

Because $\rank (\bbP) = d$, by defining  $\bbY:=\hbV_{P}\hbLambda_P^{1/2}$ we have that $\bbY\bbY^\top=\bbP= \bbX\bbX^\top$, so $\bbX = \bbY \bbW$ for some orthogonal $\bbW$. Thus, the bound in \eqref{eq:bound_ase} also holds for \mbox{$\|\hbV_{B}\hbLambda_B^{1/2}\bbW - \bbX\|_F$}. Since the ASE estimation of the latent positions is \mbox{$\hbX = \hbV_{B}\hbLambda_B$}, this implies that almost surely
\begin{gather*}
    \|\hbX\bbW - \bbX\|_F \leq C d \sqrt{\log N}.
\end{gather*}
The proof then concludes as in~\cite{sussman2014consistent}.

\bibliographystyle{IEEEtran}
%
\bibliography{citations}

\begin{thebibliography}{10}
\providecommand{\url}[1]{#1}
\csname url@samestyle\endcsname
\providecommand{\newblock}{\relax}
\providecommand{\bibinfo}[2]{#2}
\providecommand{\BIBentrySTDinterwordspacing}{\spaceskip=0pt\relax}
\providecommand{\BIBentryALTinterwordstretchfactor}{4}
\providecommand{\BIBentryALTinterwordspacing}{\spaceskip=\fontdimen2\font plus
\BIBentryALTinterwordstretchfactor\fontdimen3\font minus
  \fontdimen4\font\relax}
\providecommand{\BIBforeignlanguage}[2]{{%
\expandafter\ifx\csname l@#1\endcsname\relax
\typeout{** WARNING: IEEEtran.bst: No hyphenation pattern has been}%
\typeout{** loaded for the language `#1'. Using the pattern for}%
\typeout{** the default language instead.}%
\else
\language=\csname l@#1\endcsname
\fi
#2}}
\providecommand{\BIBdecl}{\relax}
\BIBdecl

\bibitem{RDPG_EUSIPCO_21}
F.~Larroca, P.~Bermolen, M.~Fiori, and G.~Mateos, ``Change-point detection in
  weighted and directed random dot product graphs,'' in \emph{Proc. of European
  Signal Process. Conf.}, August 23-27, 2021.

\bibitem{RDPG_Asilomar_21}
B.~Marenco, F.~Larroca, P.~Bermolen, M.~Fiori, and G.~Mateos, ``Online
  change-point detection for random dot product graphs,'' in \emph{Proc.
  Asilomar Conf. on Signals, Systems, Computers}, November 1-3, 2021.

\bibitem{page1954continuous}
E.~S. Page, ``Continuous inspection schemes,'' \emph{Biometrika}, vol.~41, no.
  1-2, pp. 100--115, 1954.

\bibitem{truong2020selective}
C.~Truong, L.~Oudre, and N.~Vayatis, ``Selective review of offline change point
  detection methods,'' \emph{Signal Process.}, vol. 167, p. 107299, 2020.

\bibitem{he2018sequentialCPD}
X.~He, Y.~Xie, S.-M. Wu, and F.-C. Lin, ``Sequential graph scanning statistic
  for change-point detection,'' in \emph{52nd Asilomar Conference on Signals,
  Systems, and Computers}, 2018, pp. 1317--1321.

\bibitem{keshavarz2020gaussian}
H.~Keshavarz, G.~Michaildiis, and Y.~Atchade, ``Sequential change-point
  detection in high-dimensional {G}aussian graphical models,'' \emph{J. Mach.
  Learn. Res}, vol.~21, no.~82, pp. 1--57, 2020.

\bibitem{peel2014evolvingCPD}
L.~Peel and A.~Clauset, ``Detecting change points in the large-scale structure
  of evolving networks,'' \emph{arXiv:1403.0989 [cs.SI]}, 2014.

\bibitem{kaushik2020signalsCPD}
C.~Kaushik, T.~M. Roddenberry, and S.~Segarra, ``Network topology change-point
  detection from graph signals with prior spectral signatures,'' in \emph{Proc.
  Int. Conf. Acoustics, Speech, Signal Process.}, 2021, pp. 5395--5399.

\bibitem{zhang2020icassp}
M.~Zhang, L.~Xie, and Y.~Xie, ``Online community detection by spectral cusum,''
  in \emph{Proc. Int. Conf. Acoustics, Speech, Signal Process.}, 2020, pp.
  3402--3406.

\bibitem{ferrari2019camsap}
A.~Ferrari, C.~Richard, and L.~Verduci, ``Distributed change detection in
  streaming graph signals,'' in \emph{Proc. IEEE Workshop on Computational
  Advances in Multi-Sensor Adaptive Process.}, 2019, pp. 166--170.

\bibitem{chen2019sequentialCPD}
\BIBentryALTinterwordspacing
H.~Chen, ``{Sequential change-point detection based on nearest neighbors},''
  \emph{Ann. Stat}, vol.~47, no.~3, pp. 1381 -- 1407, 2019. [Online].
  Available: \url{https://doi.org/10.1214/18-AOS1718}
\BIBentrySTDinterwordspacing

\bibitem{wang2014dynamicCPD}
H.~Wang, M.~Tang, Y.~Park, and C.~E. Priebe, ``Locality statistics for anomaly
  detection in time series of graphs,'' \emph{IEEE Trans. Signal Process.},
  vol.~62, no.~3, pp. 703--717, 2014.

\bibitem{yu2021onlineCPD}
Y.~Yu, O.~H.~M. Padilla, D.~Wang, and A.~Rinaldo, ``Optimal network online
  change point localisation,'' \emph{arXiv:2101.05477 [math.ST]}, 2021.

\bibitem{young2007random}
S.~J. Young and E.~R. Scheinerman, ``Random dot product graph models for social
  networks,'' in \emph{Algorithms and Models for the Web-Graph}, A.~Bonato and
  F.~R.~K. Chung, Eds.\hskip 1em plus 0.5em minus 0.4em\relax Berlin,
  Heidelberg: Springer Berlin Heidelberg, 2007, pp. 138--149.

\bibitem{priebe2018survey}
A.~Athreya, D.~E. Fishkind, M.~Tang, C.~E. Priebe, Y.~Park, J.~T. Vogelstein,
  K.~Levin, V.~Lyzinski, and Y.~Qin, ``Statistical inference on random dot
  product graphs: A survey,'' \emph{J. Mach. Learn. Res.}, vol.~18, no.~1, p.
  8393–8484, January 2017.

\bibitem{kirch2015estimating}
C.~Kirch and J.~{Tadjuidje Kamgaing}, ``On the use of estimating functions in
  monitoring time series for change points,'' \emph{J. Stat. Plan. Inference},
  vol. 161, pp. 25 -- 49, 2015.

\bibitem{padilla2019change}
O.~H.~M. Padilla, Y.~Yu, and C.~E. Priebe, ``Change point localization in
  dependent dynamic nonparametric random dot product graphs,''
  \emph{arXiv:1911.07494 [stat.ME]}, 2019.

\bibitem{priebe2017semiparametric}
C.~E. Priebe, Y.~Park, M.~Tang, A.~Athreya, V.~Lyzinski, J.~T. Vogelstein,
  Y.~Qin, B.~Cocanougher, K.~Eichler, M.~Zlatic, and A.~Cardona,
  ``Semiparametric spectral modeling of the drosophila connectome,''
  \emph{arXiv:1705.03297 [stat.ML]}, 2017.

\bibitem{tang2017robust}
R.~Tang, M.~Tang, J.~T. Vogelstein, and C.~E. Priebe, ``Robust estimation from
  multiple graphs under gross error contamination,'' \emph{arXiv:1707.03487
  [stat.ME]}, 2017.

\bibitem{deford2016random}
D.~R. DeFord and D.~N. Rockmore, ``A random dot product model for weighted
  networks,'' \emph{arXiv:1611.02530 [stat.AP]}, 2016.

\bibitem{kolaczyk_2017}
E.~D. Kolaczyk, \emph{Topics at the Frontier of Statistics and Network
  Analysis: (Re)Visiting the Foundations}, ser. SemStat Elements.\hskip 1em
  plus 0.5em minus 0.4em\relax Cambridge University Press, 2017.

\bibitem{hoff2002latent}
P.~D. Hoff, A.~E. Raftery, and M.~S. Handcock, ``Latent space approaches to
  social network analysis,'' \emph{J. Am. Stat. Assoc.}, vol.~97, no. 460, pp.
  1090--1098, 2002.

\bibitem{scheinerman2010rdpg}
E.~Scheinerman and K.~Tucker, ``Modeling graphs using dot product
  representations,'' \emph{Comput. Stat}, vol.~25, pp. 1--16, 2010.

\bibitem{rubindelanchy2017statistical}
P.~Rubin-Delanchy, J.~Cape, M.~Tang, and C.~E. Priebe, ``A statistical
  interpretation of spectral embedding: The generalised random dot product
  graph,'' \emph{arXiv:1709.05506 [stat.ML]}, 2017.

\bibitem{zhu2006automatic}
M.~Zhu and A.~Ghodsi, ``Automatic dimensionality selection from the scree plot
  via the use of profile likelihood,'' \emph{Comput Stat Data Anal}, vol.~51,
  no.~2, pp. 918 -- 930, 2006.

\bibitem{tang2018connectome}
R.~Tang, M.~Ketcha, A.~Badea, E.~D. Calabrese, D.~S. Margulies, J.~T.
  Vogelstein, C.~E. Priebe, and D.~L. Sussman, ``Connectome smoothing via
  low-rank approximations,'' \emph{IEEE Trans. Med. Imaging}, vol.~38, no.~6,
  pp. 1446--1456, 2018.

\bibitem{levin2017omnibus}
K.~{Levin}, A.~{Athreya}, M.~{Tang}, V.~{Lyzinski}, and C.~E. {Priebe}, ``A
  central limit theorem for an omnibus embedding of multiple random dot product
  graphs,'' in \emph{Int. Conf. on Data Mining Workshops}, 2017, pp. 964--967.

\bibitem{kirch2018sequentialCPD}
C.~Kirch and S.~Weber, ``{Modified sequential change point procedures based on
  estimating functions},'' \emph{Electron. J. Statist.}, vol.~12, no.~1, pp.
  1579 -- 1613, 2018.

\bibitem{Bentkus2003}
V.~Bentkus, ``On the dependence of the {B}erry--{E}sseen bound on dimension,''
  \emph{Journal of Statistical Planning and Inference}, vol. 113, no.~2, pp.
  385--402, 2003.

\bibitem{cdf_approx_1}
J.~P. Imhof, ``Computing the distribution of quadratic forms in normal
  variables,'' \emph{Biometrika}, vol.~48, no. 3-4, pp. 419--426, 1961.

\bibitem{cdf_approx_2}
M.~Sankaran, ``{Approximations to the non-central chi-square distribution},''
  \emph{Biometrika}, vol.~50, no. 1-2, pp. 199--204, 06 1963.

\bibitem{sussman2014consistent}
D.~L. Sussman, M.~Tang, and C.~E. Priebe, ``{Consistent Latent Position
  Estimation and Vertex Classification for Random Dot Product Graphs},''
  \emph{IEEE Transactions on Pattern Analysis and Machine Intelligence},
  vol.~36, no.~1, pp. 48--57, 2014.

\bibitem{numpy}
C.~R. Harris, K.~J. Millman, S.~J. van~der Walt, and {et al.}, ``Array
  programming with {N}um{P}y,'' \emph{Nature}, vol. 585, pp. 357--362, 2020.

\bibitem{networkx}
A.~A. Hagberg, D.~A. Schult, and P.~J. Swart, ``Exploring network structure,
  dynamics, and function using networkx,'' in \emph{Proceedings of the 7th
  Python in Science Conference}, G.~Varoquaux, T.~Vaught, and J.~Millman, Eds.,
  Pasadena, CA USA, 2008, pp. 11 -- 15.

\bibitem{pandas}
W.~{M}c{K}inney, ``{D}ata structures for statistical computing in {P}ython,''
  in \emph{{P}roceedings of the 9th {P}ython in {S}cience {C}onference},
  {S}t\'efan van~der {W}alt and {J}arrod {M}illman, Eds., 2010, pp. 56 -- 61.

\bibitem{graspy}
J.~Chung, B.~D. Pedigo, E.~W. Bridgeford, B.~K. Varjavand, H.~S. Helm, and
  J.~T. Vogelstein, ``Graspy: Graph statistics in {P}ython.'' \emph{J. Mach.
  Learn. Res.}, vol.~20, no. 158, pp. 1--7, 2019.

\bibitem{capdehourat2020nation}
G.~{Capdehourat}, F.~{Larroca}, and G.~{Morales}, ``A nation-wide {Wi-Fi}
  {RSSI} dataset: Statistical analysis and resulting insights,'' in \emph{2020
  IFIP Networking Conference (Networking)}, 2020, pp. 370--378.

\bibitem{eagle2006reality}
N.~Eagle and A.~(Sandy)~Pentland, ``Reality mining: Sensing complex social
  systems,'' \emph{Personal Ubiquitous Comput.}, vol.~10, no.~4, p. 255–268,
  March 2006.

\bibitem{athreya2016limit}
A.~Athreya, C.~E. Priebe, M.~Tang, V.~Lyzinski, D.~J. Marchette, and D.~L.
  Sussman, ``A limit theorem for scaled eigenvectors of random dot product
  graphs,'' \emph{Sankhya A}, vol.~78, no.~1, pp. 1--18, 2016.

\bibitem{bourgade2017eigenvector}
\BIBentryALTinterwordspacing
P.~Bourgade, J.~Huang, and H.-T. Yau, ``{Eigenvector statistics of sparse
  random matrices},'' \emph{Electronic Journal of Probability}, vol.~22, pp. 1
  -- 38, 2017. [Online]. Available: \url{https://doi.org/10.1214/17-EJP81}
\BIBentrySTDinterwordspacing

\bibitem{rencher2008linear}
A.~C. Rencher and G.~B. Schaalje, \emph{Linear models in statistics}.\hskip 1em
  plus 0.5em minus 0.4em\relax John Wiley \& Sons, 2008.

\bibitem{fishkind2013consistent}
\BIBentryALTinterwordspacing
D.~E. Fishkind, D.~L. Sussman, M.~Tang, J.~T. Vogelstein, and C.~E. Priebe,
  ``Consistent adjacency-spectral partitioning for the stochastic block model
  when the model parameters are unknown,'' \emph{SIAM Journal on Matrix
  Analysis and Applications}, vol.~34, no.~1, pp. 23--39, 2013. [Online].
  Available: \url{https://doi.org/10.1137/120875600}
\BIBentrySTDinterwordspacing

\bibitem{yu2015useful}
Y.~Yu, T.~Wang, and R.~J. Samworth, ``A useful variant of the {D}avis--{K}ahan
  theorem for statisticians,'' \emph{Biometrika}, vol. 102, no.~2, pp.
  315--323, 2015.

\end{thebibliography}

\end{document}